\newcommand{\HH}{\mathcal{D}}
\newcommand{\xt}{x^\star}
\newcommand{\yt}{{y}^\star}
\newcommand{\pt}{p_\star}
\newtheorem*{proposition*}{Proposition}
\newtheorem{proposition}{Proposition}
\newenvironment{talign*}
 {\csname align*\endcsname}
 {\endalign}
\newenvironment{talign}
 {\csname align\endcsname}
 {\endalign}
\crefname{talign}{}{}
\crefname{equation}{}{}
\newcommand{\cmark}{\textcolor{ForestGreen}{\ding{51}}}%
\newcommand{\xmark}{\textcolor{BrickRed}{\ding{55}}}%
\newcommand{\given}{\, | \, }
\newcommand{\vpmse}[2]{#1\textcolor{gray}{$\pm$#2}}
\newcommand{\aline}{\textproc{Aline}\xspace}
\title{ALINE: Joint Amortization for Bayesian Inference and Active Data Acquisition}
\author{
\textbf{Daolang Huang}$^{1,2}$ \quad
\textbf{Xinyi Wen}$^{1,2}$ \quad
\textbf{Ayush Bharti}$^{2}$ \quad
\textbf{Samuel Kaski}$^{*1,2,4}$ \quad 
\textbf{Luigi Acerbi}$^{*3}$ \quad \\[0.35em]
% \vspace{1em}
$^{1}$ ELLIS Institute Finland \\[0.2em]
$^{2}$ Department of Computer Science, Aalto University, Finland \\[0.2em]
$^{3}$ Department of Computer Science, University of Helsinki, Finland \\[0.2em]
$^{4}$ Department of Computer Science, University of Manchester, UK \\[0.4em]
% \vspace{0.5em}
\texttt{\{daolang.huang, xinyi.wen, ayush.bharti, samuel.kaski\}@aalto.fi} \\[0.2em]
\texttt{luigi.acerbi@helsinki.fi}
}
\begin{document}

\maketitle

\begingroup
\renewcommand\thefootnote{}\footnote{$^*$ Equal contribution.} 
\addtocounter{footnote}{-1} 
\endgroup
\vspace{-4mm}

\begin{abstract}
Many critical applications, from autonomous scientific discovery to personalized medicine, demand systems that can both strategically acquire the most informative data and instantaneously perform inference based upon it. 
While amortized methods for Bayesian inference and experimental design offer part of the solution, neither approach is optimal in the most general and challenging task, where new data needs to be collected for instant inference. To tackle this issue, we introduce the Amortized Active Learning and Inference Engine (\aline), a unified framework for amortized Bayesian inference and active data acquisition. \aline leverages a transformer architecture trained via reinforcement learning with a reward based on \emph{self-estimated} information gain provided by its own integrated inference component. This allows it to strategically query informative data points while simultaneously refining its predictions. Moreover, \aline can \emph{selectively} direct its querying strategy towards specific subsets of model parameters or designated predictive tasks, optimizing for posterior estimation, data prediction, or a mixture thereof. Empirical results on regression-based active learning, classical Bayesian experimental design benchmarks, and a psychometric model with selectively targeted parameters demonstrate that \aline delivers both instant and accurate inference along with efficient selection of informative points.
\end{abstract}

% TL;DR: We introduce a novel framework that seamlessly integrates amortized Bayesian inference and active data acquisition, featuring adaptive strategies that can optimize for diverse, user-specified learning objectives at deployment.

\section{Introduction}

Bayesian inference~\citep{gelman2013bayesian} and Bayesian experimental design~\citep{rainforth2024modern} offer principled mathematical means for reasoning under uncertainty and for strategically gathering data, respectively. While both are foundational, they introduce notorious computational challenges. For example, in scenarios with continuous data streams, repeatedly applying gold-standard inference methods such as Markov Chain Monte Carlo (MCMC) \citep{carpenter2017stan} to update posterior distributions can be computationally demanding, leading to various approximate sequential inference techniques~\citep{broderick2013streaming, doucet2001sequential}, yet challenges in achieving both speed and accuracy persist. Similarly, in Bayesian experimental design (BED) or Bayesian active learning (BAL), the iterative estimation and optimization of design objectives can become costly, especially in sequential learning tasks requiring rapid design decisions~\citep{filstrofftargeted, giovagnoli2021bayesian, pasek2010optimizing}, such as in psychophysical experiments where the goal is to quickly infer the subject's perceptual or cognitive parameters~\citep{watson2017quest+,powers2017pavlovian}.
Moreover, a common approach in BED involves greedily optimizing for single-step objectives, such as the Expected Information Gain (EIG), which measures the anticipated reduction in uncertainty. However, this leads to \emph{myopic} designs that may be suboptimal, which do not fully consider how current choices influence future learning opportunities \citep{foster2021deep}. 

To address these issues, a promising avenue has been the development of \emph{amortized} methods, leveraging recent advances in deep learning~\citep{zammit2024neural}. The core principle of amortization involves pre-training a neural network on a wide range of simulated or existing problem instances, allowing the network to ``meta-learn'' a direct mapping from a problem's features (like observed data) to its solution (such as a posterior distribution or an optimal experimental design). Consequently, at deployment, this trained network can provide solutions for new, related tasks with remarkable efficiency, often in a single forward pass.
Amortized Bayesian inference (ABI) methods---such as neural posterior estimation \citep{lueckmann2017flexible, papamakarios2016fast, greenberg2019automatic, radev2023bayesflow} that target the posterior, or neural processes \citep{garnelo2018neural, garnelo2018conditional, mullertransformers, nguyen2022transformer} that target the posterior predictive distribution---yield almost instantaneous results for new data, bypassing MCMC or other approximate inference methods. Similarly, methods for amortized data acquisition, including amortized BED \citep{foster2021deep, ivanova2024step, ivanova2021implicit, blau2022optimizing} and BAL \citep{li2025amortized}, instantaneously propose the next design that targets the learning of the posterior or the posterior predictive distribution, respectively, using a deep policy network---bypassing the iterative optimization of complex information-theoretic objectives. 

While amortized approaches have significantly advanced Bayesian inference and data acquisition, progress has largely occurred in parallel. ABI offers rapid inference but typically assumes passive data collection, not addressing strategic data acquisition under limited budgets and data constraints common in fields such as clinical trials \citep{berry2010bayesian} and material sciences \citep{lookman2019active, krause2006near}. Conversely, amortized data acquisition excels at selecting informative data points, but often the subsequent inference update based on new data is not part of the amortization, potentially requiring separate, costly procedures like MCMC. This separation means the cycle of efficiently deciding what data to gather and then instantaneously updating beliefs has not yet been seamlessly integrated. 
Furthermore, existing amortized data acquisition methods often optimize for information gain across \emph{all} model parameters or a fixed predictive target, lacking the flexibility to selectively target specific subsets of parameters or adapt to varying inference goals. This is a significant drawback in scenarios with nuisance parameters \citep{prins2013psi} or when the primary interest lies in particular aspects of the model or predictions---which might not be fully known in advance.
A unified framework that jointly amortizes both active data acquisition and inference, while also offering flexible acquisition goals, would therefore be highly beneficial.

\begin{table}[t]
\centering
\caption{Comparison of different methods for amortized Bayesian inference and data acquisition.}
\scalebox{0.9}{
\begin{tabular}{l|cc|ccc}
\toprule
\multirow{2}{*}{Method} & \multicolumn{2}{c|}{\emph{Amortized Inference}} & \multicolumn{3}{c}{\emph{Amortized Data Acquisition}}  \vspace{0.1cm}\\
% \cmidrule{2-5}
 & Posterior & Predictive & Posterior & Predictive & Flexible  \\
\midrule
Neural Processes  \citep{garnelo2018neural, garnelo2018conditional, mullertransformers, nguyen2022transformer} & \xmark & \cmark & \xmark & \xmark & \xmark \\
Neural Posterior Estimation  \citep{lueckmann2017flexible, papamakarios2016fast, greenberg2019automatic, radev2023bayesflow} & \cmark & \xmark & \xmark & \xmark  & \xmark  \\
DAD \citep{foster2021deep}, RL-BOED \citep{blau2022optimizing} & \xmark & \xmark & \cmark & \xmark  & \xmark  \\
RL-sCEE \citep{blau2023cross}, vsOED \citep{shen2025variational} & \cmark & \xmark & \cmark & \xmark & \xmark   \\
AAL \citep{li2025amortized} & \xmark & \xmark & \xmark & \cmark  & \xmark  \\
JANA~\cite{radev2023jana}, Simformer \citep{gloeckler2024all}, ACE \citep{chang2025amortized} & \cmark & \cmark & \xmark & \xmark & \xmark  \\
\midrule
\aline (this work) & \cmark & \cmark & \cmark & \cmark & \cmark \\
\bottomrule
\end{tabular}
}
\label{tab:method-comparison}
\end{table}

In this paper, we introduce \textbf{A}mortized Active \textbf{L}earning and \textbf{IN}ference \textbf{E}ngine (\textbf{\aline}), a novel framework designed to overcome these limitations by unifying amortized Bayesian inference and active data acquisition within a single, cohesive system (\Cref{tab:method-comparison}). \aline utilizes a transformer-based architecture \citep{vaswani2017attention} that, in a single forward pass, concurrently performs posterior estimation, generates posterior predictive distributions, and decides which data point to query next. Critically, and in contrast to existing methods, \aline offers \emph{flexible, targeted acquisition}: it can dynamically adjust at runtime its data-gathering strategy to focus on any specified combination of model parameters or predictive tasks. This is enabled by an attention mechanism allowing the policy to condition on specific inference goals, making it particularly effective in the presence of nuisance variables \citep{prins2013psi} or for focused investigations. \aline is trained using a self-guided reinforcement learning objective; the reward is the improvement in the log-probability of its own approximate posterior over the selected targets, a principle derived from variational bounds on the expected information gain \citep{foster2019variational}. Extensive experiments on diverse tasks demonstrate \aline's ability to simultaneously deliver fast, accurate inference and rapidly propose informative data points.

\section{Background}

Consider a parametric conditional model defined on some space $\mathcal{Y} \subseteq \mathbb{R}^{d_\mathcal{Y}}$ of output variables $y$ given inputs (or covariates) $x \in \mathcal{X} \subseteq \mathbb{R}^{d_\mathcal{X}}$, and parameterized by $\theta \in \Theta \subseteq \mathbb{R}^L$. Let $\HH_T = \{(x_i, y_i)\}_{i=1}^{T} $ be a collection of $T$ data points (or \emph{context}) and $p(\HH_T \given \theta) \equiv p(y_{1:T} \given x_{1:T}, \theta)$ denote the likelihood function associated with the model, which we assume to be well-specified in this paper (i.e., it matches the true data generation process).
Given a prior distribution $p(\theta)$, the classical Bayesian inference or prediction problem involves estimating either the \emph{posterior} distribution $p(\theta \given \HH_T) \propto p(  \HH_T \given \theta) p(\theta)$, or the \emph{posterior predictive} distribution $p(y_{1:M}^\star \given x_{1:M}^\star, \HH_T) = \mathbb{E}_{ p(\theta \given \HH_T)}[p(y_{1:M}^\star \given x_{1:M}^\star, \theta,   \HH_T)]$ over \emph{target outputs} $\yt_{1:M} := (\yt_1, ..., \yt_M)$  corresponding to a given set of \emph{target inputs} $\xt_{1:M} := (\xt_1, ..., \xt_M)$. Estimating these quantities repeatedly via approximate inference methods such as MCMC can be computationally costly~\citep{gelman2013bayesian}, motivating the need for amortized inference methods. 

\paragraph{Amortized Bayesian inference (ABI).}

ABI methods involve training a conditional density network $q_\phi$, parameterized by learnable weights $\phi$, to approximate either the posterior predictive distribution $q_\phi(\yt_{1:M} | \xt_{1:M}, \HH_T) \approx p(\yt_{1:M} | \xt_{1:M}, \HH_T)$ \citep{garnelo2018conditional, kimattentive, gordonconvolutional, bruinsmaautoregressive, huang2023practical, bruinsmagaussian, nguyen2022transformer, mullertransformers}, the joint posterior $q_\phi(\theta \given \HH_T) \approx p(\theta \given \HH_T)$ \citep{lueckmann2017flexible, papamakarios2016fast, greenberg2019automatic, radev2023bayesflow}, or both \citep{radev2023jana,gloeckler2024all,chang2025amortized}. These networks are usually trained by minimizing the negative log-likelihood (NLL) objective with respect to $\phi$:
\begin{equation} \label{eq:nll}
\mathcal{L}(\phi) =
\begin{cases}
-\mathbb{E}_{ p(\theta) p(\HH_T \given \theta) p(\xt_{1:M},\yt_{1:M} \given \theta)} \left[ \log q_\phi(\yt_{1:M} \given \xt_{1:M}, \HH_T) \right], & \text{(predictive tasks)} \\
-\mathbb{E}_{ p(\theta) p(\HH_T \given \theta)} \left[ \log q_\phi(\theta \given \HH_T) \right], & \text{(posterior estimation)}
\end{cases}
\end{equation}
where the expectation is over datasets simulated from the generative process $p(\HH_T \given \theta)p(\theta)$.
Once trained, $q_\phi$ can then perform instantaneous approximate inference on new contexts and unseen data points with a single forward pass. However, these ABI methods do not have the ability to strategically collect the most informative data points to be included in $\HH_T$ in order to improve inference outcomes.

\paragraph{Amortized data acquisition.}

BED \citep{lindley1956measure, chaloner1995bayesian, ryan2016review, rainforth2024modern} methods aim to sequentially select the next input (or design parameter) $x$ to query in order to maximize the \emph{Expected Information Gain} (EIG), that is, the information gained about parameters $\theta$ upon observing $y$:
\begin{equation}
    \text{EIG}(x):=\mathbb{E}_{p(y \given x)}\left[H[p(\theta)] - H[p(\theta \given x, y)]\right],
\end{equation}
where $H$ is the Shannon entropy $H[p(\cdot)] = -\mathbb{E}_{p(\cdot)}\left[\log p(\cdot)\right]$.
Directly computing and optimizing EIG sequentially at each step of an experiment is computationally expensive due to the nested expectations, and leads to myopic designs.
Amortized BED methods address these limitations by offline learning a design policy network $\pi_\psi: \mathcal{X} \times \mathcal{Y} \rightarrow \mathcal{X}$, parameterized by $\psi$ \citep{foster2021deep, ivanova2021implicit, blau2022optimizing}, such that at any step $t$ the policy $\pi_\psi$ proposes a query $x_t \sim \pi_\psi(\cdot \given \HH_{t-1}, \theta)$ to acquire a data point $y_t \sim p(y \given x_t, \theta)$, forming $\HH_t = \HH_{t-1} \cup \{(x_t, y_t)\}$. To propose non-myopic designs, $\pi_\psi$ is trained by maximizing tractable lower bounds of the total EIG over $T$-step sequential trajectories generated by the policy $\pi_\psi$: 
\begin{equation} \label{eq:true_total_eig}
    \text{sEIG}(\psi) = \mathbb{E}_{p(\HH_T\given \pi_\psi)}\left[H[p(\theta)] -H[p(\theta \given \HH_T)]\right].
\end{equation}
By pre-compiling the design strategy into the policy network, amortized BED methods allow for near-instantaneous design proposals during the deployment phase via a fast forward pass.
Typically, these amortized BED methods are designed to maximize information gain about the full set of model parameters $\theta$. Separately, for applications where the primary interest lies in reducing predictive uncertainty rather than parameter uncertainty, objectives like the \emph{Expected Predictive Information Gain} (EPIG) \citep{smith2023prediction} have been proposed, so far in non-amortized settings:
\begin{equation}
\label{eq:epig}
    \text{EPIG}(x)=\mathbb{E}_{p_\star(\xt)p(y \given x)}\left[H[p(\yt \given \xt)]-H[p(\yt \given \xt, x, y)]\right].
\end{equation}
This measures the EIG about predictions $\yt$ at target inputs $\xt$ drawn from a target input distribution $p_\star(\xt)$. Notably, current amortized data acquisition methods are inflexible: they are generally trained to learn about all parameters $\theta$ (via objectives like sEIG) and lack the capability to dynamically target specific subsets of parameters or adapt their acquisition strategy to varying inference goals at runtime.

\paragraph{Related work.}

A major family of ABI methods is Neural Processes (NPs) \citep{garnelo2018neural, garnelo2018conditional}, that learn a mapping from the observed context data points to a predictive distribution for new target points. Early NPs often employed MLP-based encoders \citep{garnelo2018neural, garnelo2018conditional, huang2023practical, bruinsmaautoregressive}, while recent works utilize more advanced attention and transformer architectures \citep{kimattentive, nguyen2022transformer, mullertransformers, fenglatent, feng2024memory, ashman2024context, ashman2024translation}.
% to enhance their representational capacity. 
Complementary to NPs, methods within simulation-based inference \citep{cranmer2020frontier} focus on amortizing the posterior distribution \citep{papamakarios2016fast, lueckmann2017flexible, greenberg2019automatic, radev2023bayesflow, mittal2025amortized, zammit2024neural}. More recently, methods for amortizing both the posterior and posterior predictive distributions have been proposed \citep{gloeckler2024all, radev2023jana, chang2025amortized}. Specifically, ACE \citep{chang2025amortized} shows how to flexibly condition on diverse user-specified inference targets, a method \aline incorporates for its own flexible inference capabilities. Building on this principle of goal-directed adaptability, \aline advances it by integrating a learned policy that dynamically tailors the data acquisition strategy to the specified objectives. 
Existing amortized BED or BAL methods \citep{foster2021deep, ivanova2021implicit, blau2022optimizing, li2025amortized} that learn an offline design policy do not provide real-time estimates of the posterior, unlike \aline. 
Recent exceptions include methods like RL-sCEE \citep{blau2023cross} and vsOED \citep{shen2025variational}, that use a variational posterior bound of EIG to provide amortized posterior inference via a separate proposal network.
Compared to these methods, \aline uses a single, unified architecture where the same model performs amortized inference for both posterior and posterior predictive distributions, and learns the flexible acquisition policy. 

\section{Amortized active learning and inference engine} \label{sec:method}

\paragraph{Problem setup.}

We aim to develop a system that intelligently acquires a sequence of $T$ informative data points, $\HH_T = \{(x_i, y_i)\}_{i=1}^{T}$, to enable accurate and rapid Bayesian inference. This system must be flexible: capable of targeting different quantities of interest, such as subsets of model parameters or future predictions.
To formalize this flexibility, we introduce a \emph{target specifier}, denoted by $\xi \in \Xi$, which defines the specific inference goal. We consider two primary types of targets:
(1) \textbf{Parameter targets ($\xi^{\theta}_S$)} with the goal to infer a specific subset of model parameters $\theta_S$, where $S \subseteq \{1, \ldots, L\}$ is an index set of parameters of interest. For example, $\xi^\theta_{\{1,2\}}$ would target the joint posterior of $\theta_1$ and $\theta_2$, while $\xi^\theta_{\{1, \ldots, L\}}$ targets all parameters, aligning with standard BED. We define $\mathcal{S} = \{S_1, \ldots, S_{|\mathcal{S}|}\}$ as the collection of all predefined parameter index subsets the system can target.
(2) \textbf{Predictive targets ($\xi^{\yt}_{p_\star}$)}, where the objective is to improve the posterior predictive distribution $p(\yt|\xt, \HH_T)$ for inputs $\xt$ drawn from a specified target input distribution $p_\star(x^\star)$. For simplicity, and following \citet{smith2023prediction}, we consider a single target distribution $p_\star(x^\star)$ in this work.
The set of all target specifiers that \aline is trained to handle is thus $\Xi = \{\xi_{S}^\theta \}_{S \in \mathcal{S}} \cup \{\xi_{\pt}^{\yt} \}$. We assume a discrete distribution $p(\xi)$ over these possible targets, reflecting the likelihood or importance of each specific goal.

\begin{figure}[t] 
    \centering
    \includegraphics[width=0.95\linewidth]{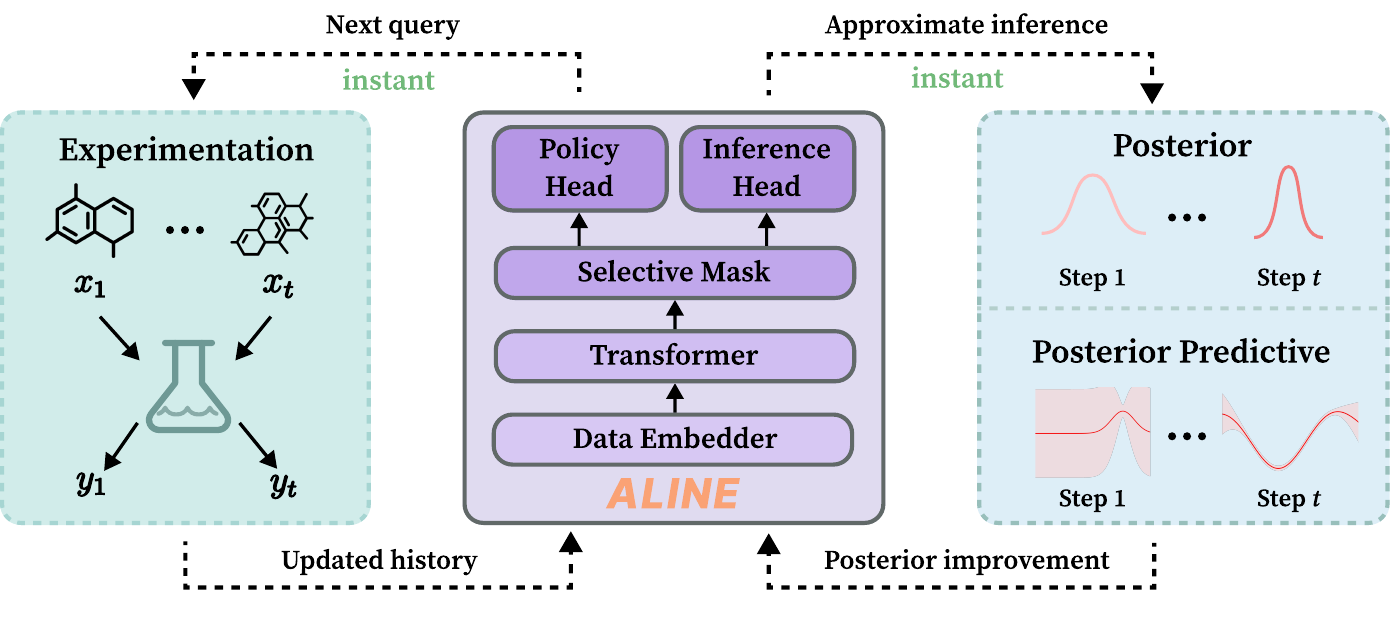}
    \vspace{-2mm}
    \caption{Conceptual workflow of \aline, demonstrating its capability to sequentially query informative data points and perform rapid posterior or predictive inference based on the gathered data.} 
    \label{fig:workflow}
    \vspace{-2mm}
\end{figure}

To achieve both instant, informative querying and accurate inference, we propose to jointly learn an \emph{amortized inference model} $q_\phi$ and an \emph{acquisition policy} $\pi_\psi$ within a single, integrated architecture. Given the accumulated data history $\HH_{t-1}$ and a specific target $\xi \in \Xi$, the policy $\pi_\psi$ selects the next query $x_t$ designed to be most informative for that target. Subsequently, the new data point $(x_t, y_t)$ is observed, and the inference model $q_\phi$ updates its estimate of the corresponding posterior or posterior predictive distribution. A conceptual workflow of \aline is illustrated in \Cref{fig:workflow}.
In the remainder of this section, we detail the objectives for training the inference network $q_\phi$ (\Cref{sec:inference_network}) and the acquisition policy $\pi_\psi$ (\Cref{sec:policy_network}), discuss their practical implementation (\Cref{sec:practical_implementation}), and describe the unified model architecture (\Cref{sec:architecture}).

\subsection{Amortized inference} \label{sec:inference_network}

We use the inference network $q_\phi$ to provide accurate approximations of the true Bayesian posterior $p(\theta \given \HH_T)$ or posterior predictive distribution $p(\yt \given \xt, \HH_T)$, given the acquired data $\HH_T$.
We train $q_\phi$ via maximum-likelihood (Eq.~\ref{eq:nll}). Specifically, for parameter targets $\xi = \xi_S^{\theta}$, our objective is:
\begin{talign} \label{eq:obj_posterior}
    \mathcal{L}_S^\theta(\phi) = -\mathbb{E}_{ p(\theta) p(\HH_T \given \theta)} \left[ \log q_\phi(\theta_S \given \HH_T) \right] \approx -\mathbb{E}_{ p(\theta) p(\HH_T \given \theta)} \left[ \sum_{l \in S} \log q_\phi(\theta_l \given \HH_T) \right],
\end{talign}
where we adopt a diagonal or \emph{mean field} approximation, where the joint distribution is obtained as a product of marginals $q_\phi(\theta_S \given \HH_T) \approx \prod_{l \in S} q_\phi(\theta_l \given  \HH_T)$.
Analogously, for predictive targets $\xi = \xi_{\pt}^{\yt}$, we assume a factorized likelihood over targets
sampled from the target input distribution $\pt(\xt)$:
\begin{talign} \label{eq:obj_predictive}
    \mathcal{L}^{\yt}_{\pt}(\phi) &= -\mathbb{E}_{ p(\theta) p(\HH_T \given \theta)\pt(\xt_{1:M}) p(\yt_{1:M} \given \xt_{1:M}, \theta)} \left[ \log q_\phi(\yt_{1:M} \given \xt_{1:M}, \HH_T) \right] \nonumber \\ 
    &\approx -\mathbb{E}_{ p(\theta) p(\HH_T \given \theta)\pt(\xt_{1:M}) p(\yt_{1:M} \given \xt_{1:M}, \theta)} \left[ \sum_{m=1}^M \log q_\phi(\yt_m \given \xt_m, \HH_T) \right].
\end{talign}
The factorized form of these training objectives is a common scalable choice in the neural process literature~\citep{garnelo2018conditional,mullertransformers,nguyen2022transformer,chang2025amortized} and is more flexible than it might seem, as conditional marginal distributions can be extended to represent \emph{full joints} autoregressively~\citep{nguyen2022transformer,bruinsmaautoregressive,chang2025amortized}. However, a full autoregressive model would require multiple forward passes to compute the reward signal for our policy at each training step, making the learning process computationally intractable. Therefore, for simplicity and tractability, within the scope of this paper we focus on the marginals, leaving the autoregressive extension to future work.
Eqs.~\ref{eq:obj_posterior} and \ref{eq:obj_predictive} form the basis for training the inference component $q_{\phi}$. Optimizing them minimizes the Kullback-Leibler (KL) divergence between the true target distributions (posterior or predictive) defined by the underlying generative process and the model's approximations $q_{\phi}$~\citep{mullertransformers}. 
Learning an accurate $q_\phi$ is crucial as it not only determines the quality of the final inference output but also serves as the basis for guiding the data acquisition policy, as we see next.

\subsection{Amortized data acquisition} \label{sec:policy_network}

The quality of inference from $q_{\phi}$ depends critically on the informativeness of the acquired dataset $\HH_T$. The acquisition policy $\pi_{\psi}$ is thus responsible for actively selecting a sequence of query-data pairs $(x_t, y_t)$ to maximize the information gained about a specific target $\xi$.

When targeting parameters $\theta_S$ (i.e., $\xi = \xi^\theta_{S}$), the objective is the total Expected Information Gain ($\text{sEIG}_{\theta_S}$) about $\theta_S$ over the $T$-step trajectory generated by $\pi_\psi$ (see Eq.~\ref{eq:true_total_eig}):
\begin{equation}
\label{eq:total_eig}
        \text{sEIG}_{\theta_S}(\psi) = \mathbb{E}_{p(\theta)p(\HH_T \given \pi_\psi, \theta)} \left[ \log p(\theta_S \given  \HH_T) \right] + H[p(\theta_S)].
\end{equation}
For completeness, we include a derivation of $\text{sEIG}_{\theta_S}$ in \Cref{app:sEIG_definition} which is analogous to that of sEIG in \citep{foster2021deep}.
Directly optimizing $ \text{sEIG}$ is generally intractable due to its reliance on the unknown true posterior $p(\theta_S \given  \HH_T)$. We circumvent this by substituting $p(\theta_S \given  \HH_T)$ with its approximation $q_\phi(\theta_S \given \HH_T)$ (from Eq.~\ref{eq:obj_posterior}), yielding the tractable objective $\mathcal{J}^\theta_S$ for training $\pi_\psi$:
\begin{talign}  \label{eq:bed_objective}
        \mathcal{J}_{S}^\theta(\psi) := 
        \mathbb{E}_{p(\theta)p(\HH_T \given \pi_\psi, \theta)} \left[ \sum_{l \in S} \log q_\phi(\theta_l \given \HH_T) \right] + H[p(\theta_S)].
\end{talign}
The inference objective (Eq.~\ref{eq:obj_posterior}) and this policy objective are thus coupled: $\mathcal{L}_S^\theta(\phi)$ depends on data $\HH_T$ acquired through $\pi_\psi$, and $\mathcal{J}_{S}^\theta(\psi)$ depends on the inference network $q_\phi$.

Similarly, when targeting predictions for $\xi = \xi_{p_\star}^{\yt}$, we aim to maximize information about $p(\yt \given \xt, \HH_T)$ for $\xt \sim p_\star(\xt)$. We extend the Expected Predictive Information Gain (EPIG) framework~\citep{smith2023prediction} to the amortized sequential setting, defining the total sEPIG:
\begin{proposition}
    \label{prop:sEPIG_definition}
    The total expected predictive information gain for a design policy $\pi_\psi$ over a data trajectory of length $T$ is:
    \begin{align*}
        \text{sEPIG}(\psi) &:= \mathbb{E}_{\pt(\xt) p(\HH_T \given \pi_\psi)} [H[p(\yt\given \xt)] - H[p(\yt\given \xt, \HH_T)]]\\
        &\,= \mathbb{E}_{p(\theta)p(\HH_T \given \pi_\psi, \theta)\pt(\xt) p(\yt \given \xt, \theta)} \left[\log p(\yt\given \xt,\HH_{T})\right] + \mathbb{E}_{\pt(\xt)}[H[p(\yt \given \xt)]].
    \end{align*}
\end{proposition}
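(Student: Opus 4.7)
The plan is to start from the definition of $\text{sEPIG}$ given in the proposition, split the integrand into the prior-predictive entropy and the posterior-predictive log-likelihood term, and then rewrite the latter by introducing the latent parameters $\theta$ through the conditional-independence structure of the generative process.

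First I would observe that $H[p(\yt \given \xt)]$ carries no dependence on $\HH_T$, so the outer expectation over $p(\HH_T \given \pi_\psi)$ collapses immediately and the first summand reduces to $\mathbb{E}_{\pt(\xt)}[H[p(\yt \given \xt)]]$, which is exactly the second term on the right-hand side of the target expression.

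For the posterior-predictive entropy term, I would expand the Shannon entropy as an expectation, writing $-H[p(\yt \given \xt, \HH_T)] = \mathbb{E}_{p(\yt \given \xt, \HH_T)}[\log p(\yt \given \xt, \HH_T)]$, and then fold all expectations together so that $\log p(\yt \given \xt, \HH_T)$ is averaged against the joint measure $\pt(\xt)\, p(\HH_T \given \pi_\psi)\, p(\yt \given \xt, \HH_T)$. The key manipulation is then to re-express this joint in a $\theta$-explicit form. Using the conditional independence $\yt \perp \HH_T \mid (\xt, \theta)$ (the standard iid-given-$\theta$ assumption of the model), I have $p(\yt \given \xt, \HH_T) = \int p(\theta \given \HH_T)\, p(\yt \given \xt, \theta)\, d\theta$. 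Combining with the Bayes-rule identity $p(\HH_T \given \pi_\psi)\, p(\theta \given \HH_T) = p(\theta)\, p(\HH_T \given \pi_\psi, \theta)$---which holds because the policy chooses designs based only on observed history and not on the latent $\theta$---the joint becomes $\pt(\xt)\, p(\theta)\, p(\HH_T \given \pi_\psi, \theta)\, p(\yt \given \xt, \theta)$. Substituting back yields precisely the first term in the target expression.

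The main obstacle, which is conceptual rather than technical, is making the two conditional-independence assumptions explicit and justifying them: (i) $\yt$ is iid given $\theta$, so the target datum is independent of the acquired history given the parameter, and (ii) the policy's design choices depend only on the observed history, so the policy drops out of the posterior, $p(\theta \given \HH_T, \pi_\psi) = p(\theta \given \HH_T)$. Both are standard in the amortized BED literature and mirror the assumptions used in the analogous derivation of $\text{sEIG}_{\theta_S}$ deferred to the appendix; once they are stated, the remainder of the argument is bookkeeping via Bayes' rule, Fubini, and the definition of Shannon entropy.
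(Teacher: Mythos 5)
Your proposal is correct and follows essentially the same route as the paper's proof: both split off the prior-predictive entropy term, rewrite $-H[p(\yt \given \xt, \HH_T)]$ as an expectation of $\log p(\yt \given \xt, \HH_T)$ under the joint measure, and then make $\theta$ explicit via the conditional independence $\yt \perp \HH_T \mid (\xt,\theta)$ so the joint factors as $p(\theta)\,p(\HH_T \given \pi_\psi,\theta)\,\pt(\xt)\,p(\yt \given \xt,\theta)$. The only difference is cosmetic: you derive that factorization by an explicit Bayes-rule step, while the paper states it directly (citing the same independence assumption), so no gap remains.
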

This result adapts Theorem 1 in \citep{foster2021deep} for the predictive case (see \Cref{proof:sEPIG} for proof) and, unlike single-step EPIG (Eq.~\ref{eq:epig}), considers the entire trajectory $\HH_T$ given the policy $\pi_\psi$. 

Now, similar to Eq.~\ref{eq:bed_objective}, we use the inference network $q_\phi(\yt \given \xt, \HH_T)$ to replace the true posterior predictive distribution $p(\yt \given \xt, \HH_T)$ in \Cref{prop:sEPIG_definition} to obtain our active learning objective:
\begin{equation} \label{eq:active_learning_objective}
    \mathcal{J}^{\yt}_{p_\star}(\psi) := \mathbb{E}_{p(\theta)p(\HH_T \given \pi_\psi, \theta) \pt(\xt)p(\yt  \given \xt, \theta)} \left[\log q_\phi(\yt \given \xt,\HH_{T})\right] + \mathbb{E}_{\pt(\xt)}[H[p(\yt \given \xt)]].
\end{equation}

Finally, the following proposition proves that our acquisition objectives, $\mathcal{J}_{S}^\theta$ and $\mathcal{J}_{\pt}^{\yt}$, are variational lower bounds on the true total information gains ($\text{sEIG}_{\theta_S}$ and $\text{sEPIG}$, respectively), making them principled tractable objectives for our goal. The proof is given in \Cref{app:proof_eig}.
\begin{proposition} \label{proposition:combined}
    Let the policy $\pi_\psi$ generate the trajectory $\HH_T$. With $q_\phi(\theta_S \given \mathcal{D_T})$ approximating  $p(\theta_S \given \mathcal{D_T})$, and $q_\phi(\yt \given \xt, \HH_T)$ approximating $p(\yt \given \xt, \HH_T)$, we have $\mathcal{J}_{S}^\theta(\psi) \leq \text{sEIG}_{\theta_S}(\psi)$ and $\mathcal{J}_{p_\star}^{\yt}(\psi) \leq \text{sEPIG}(\psi)$. Moreover, 
    \begin{align*}
        \text{sEIG}_{\theta_S}(\psi) - \mathcal{J}_{S}^\theta(\psi) &= \mathbb{E}_{p(\HH_T\given \pi_\psi)} [ \text{KL}( p(\theta_S\given \HH_T) || q_\phi(\theta_S\given \HH_T) )], \quad \text{and}\\
        \text{sEPIG}(\psi) - \mathcal{J}^{\yt}_{p_\star}(\psi) &= \mathbb{E}_{\pt(\xt) p(\HH_T \given \pi_\psi)} [ \text{KL}( p(\yt \given \xt, \HH_T) || q_\phi(\yt \given \xt, \HH_T) ) ].
    \end{align*}
\end{proposition}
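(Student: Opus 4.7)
The plan is to prove both bounds by direct subtraction: cancel the entropy terms, rewrite the remaining expected log-ratio through marginalization, and recognize it as an expected KL divergence, which is non-negative by Gibbs' inequality. The prior and predictive entropies $H[p(\theta_S)]$ and $\mathbb{E}_{\pt(\xt)}[H[p(\yt\given \xt)]]$ appear identically in the true information gain and in the surrogate $\mathcal{J}$, so they cancel at once, reducing the task to a single difference of expected log-likelihoods.

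For the parameter-target case, the mean-field assumption $q_\phi(\theta_S\given\HH_T) = \prod_{l\in S} q_\phi(\theta_l\given\HH_T)$ turns the sum in $\mathcal{J}_S^\theta(\psi)$ into $\log q_\phi(\theta_S\given\HH_T)$, so the gap becomes
\begin{equation*}
\text{sEIG}_{\theta_S}(\psi)-\mathcal{J}_{S}^\theta(\psi) = \mathbb{E}_{p(\theta)p(\HH_T\given\pi_\psi,\theta)}\!\left[\log p(\theta_S\given\HH_T) - \log q_\phi(\theta_S\given\HH_T)\right].
\end{equation*}
I then marginalize the nuisance coordinates $\theta_{\bar S}$ to obtain the joint $p(\theta_S,\HH_T\given\pi_\psi)$, factor it as $p(\HH_T\given\pi_\psi)\,p(\theta_S\given\HH_T)$ (the posterior does not depend on $\pi_\psi$, since the policy enters only through inputs already recorded in $\HH_T$), and recognize the inner integral over $\theta_S$ as $\text{KL}(p(\theta_S\given\HH_T)\,\|\,q_\phi(\theta_S\given\HH_T))$. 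Gibbs' inequality then delivers both the claimed identity and the lower bound.

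The predictive-target case follows the same pattern with one additional conditional-independence step. After the entropies cancel, the gap is the expectation of $\log p(\yt\given\xt,\HH_T) - \log q_\phi(\yt\given\xt,\HH_T)$ under $p(\theta)p(\HH_T\given\pi_\psi,\theta)\pt(\xt)p(\yt\given\xt,\theta)$. Using the standard modelling assumption $\yt \perp \HH_T \mid \theta, \xt$, marginalizing $\theta$ gives $\int p(\yt\given\xt,\theta)\,p(\theta\given\HH_T)\,d\theta = p(\yt\given\xt,\HH_T)$, so the sampling measure rewrites as $\pt(\xt)\,p(\HH_T\given\pi_\psi)\,p(\yt\given\xt,\HH_T)$. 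The inner expectation over $\yt$ is then exactly $\text{KL}(p(\yt\given\xt,\HH_T)\,\|\,q_\phi(\yt\given\xt,\HH_T))$, which yields both the identity and the bound.

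The only real obstacle is careful bookkeeping: tracking which quantities depend on $\pi_\psi$, invoking the conditional independencies built into the generative model, and applying Fubini when reordering the nested expectations over $\theta$, $\HH_T$, $\xt$, and $\yt$. Once the log-ratios are isolated, the argument is the amortized, multi-step variational lower bound of the same flavour as Theorem~1 in \citep{foster2021deep}, here extended uniformly to both subset-posterior and predictive targets.
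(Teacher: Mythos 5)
Your proposal is correct and follows essentially the same route as the paper's proof: the entropy terms cancel, the mean-field factorization collapses the sum into $\log q_\phi(\theta_S \given \HH_T)$, the nuisance parameters (or $\theta$ in the predictive case, via the conditional independence $\yt \perp \HH_T \mid \theta, \xt$) are marginalized so the sampling measure factorizes as $p(\HH_T \given \pi_\psi)$ times the true posterior or predictive, and the inner expectation is recognized as a KL divergence whose non-negativity gives the bound. No gaps; the bookkeeping you describe matches the paper's Eqs.~(A2)--(A5) and the corresponding predictive-case manipulation.
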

This principle of using approximate posterior (or predictive) distributions to bound information gain is foundational in Bayesian experimental design (e.g., \citep{foster2019variational}) and has been extended to sequential amortized settings \citep{blau2023cross, shen2025variational}.
Maximizing these $\mathcal{J}$ objectives thus encourages policies that increase information about the targets.
The tightness of these bounds is governed by the expected KL divergence between the true quantities and their approximation, with a more accurate $q_\phi$ leading to tighter bounds and a more effective training signal for the policy. Additionally, our objectives solely rely on \aline's variational posterior, which does not require an explicit likelihood, making it naturally applicable to problems with implicit likelihoods where only forward sampling is possible.

\paragraph{Data acquisition objective for \aline.}
\label{sec:joint}

To handle any target $\xi \in \Xi$ from a user-specified set, we unify the previously defined acquisition objectives. We define $\mathcal{J}(\psi, \xi)$ based on the type of target $\xi$:
\begin{equation*}
    \mathcal{J}(\psi, \xi) = \begin{cases} \mathcal{J}^\theta_{S}(\psi), & \text{if } \xi = \xi^\theta_{S} \\ \mathcal{J}_{\pt}^{\yt}(\psi), & \text{if } \xi = \xi_{\pt}^{\yt}. \end{cases}
\end{equation*}
The final objective for learning the policy network $\pi_\psi$, denoted as $\mathcal{J}^{\Xi}$, is the expectation of $\mathcal{J}(\psi, \xi)$ taken over the distribution of possible target specifiers $p(\xi)$: $\mathcal{J}^{\Xi}(\psi) = \mathbb{E}_{\xi \sim p(\xi)}[\mathcal{J}(\psi, \xi)]$.

\subsection{Training methodology for \aline} \label{sec:practical_implementation}

The policy and inference networks, $\pi_{\psi}$ and $q_{\phi}$, in \aline are trained jointly. Training $\pi_{\psi}$ for sequential data acquisition over a $T$-step horizon is naturally framed as a reinforcement learning (RL) problem.

\paragraph{Policy network training ($\pi_{\psi}$).}
To guide the policy, we employ a dense, per-step reward signal $R_t$ rather than relying on a sparse reward at the end of the trajectory. This approach, common in amortized experimental design \citep{blau2022optimizing, blau2023cross, huang2025amortized}, helps stabilize and accelerate learning. The reward $R_t$ quantifies the immediate improvement in the inference quality provided by $q_\phi$ upon observing a new data point $(x_t, y_t)$, specifically concerning the current target $\xi$. It is defined based on the one-step change in the log-probabilities from our acquisition objectives (Eqs.~\ref{eq:bed_objective} and \ref{eq:active_learning_objective}):
\begin{equation} \label{eq:reward}
    R_t(\xi) = \begin{cases} 
        \frac{1}{|S|} \sum_{l \in S}(\log q_\phi(\theta_l \given \HH_{t}) - \log q_\phi(\theta_l \given \HH_{t-1})), & \text{if } \xi=\xi^{\theta}_{S} \\
        \frac{1}{M}\sum_{m=1}^{M}(\log q_\phi(\yt_m \given \xt_m, \HH_{t}) - \log q_\phi(\yt_m \given \xt_m, \HH_{t-1})), & \text{if } \xi=\xi^{\yt}_{\pt}.
    \end{cases}
\end{equation}
As per common practice, for gradient stabilization we take averages (not sums) over predictions, which amounts to a constant relative rescaling of our objectives.
The policy $\pi_{\psi}$ is then trained using a policy gradient (PG) algorithm with per-episode loss:
\begin{talign} \label{eq:policy_gradient_loss}
    \mathcal{L}_{\text{PG}}(\psi) = -\sum_{t=1}^{T} \gamma^t R_t(\xi) \log \pi_\psi(x_t|\HH_{t-1}, \xi), 
\end{talign}
which maximizes the expected cumulative $\gamma$-discounted reward over trajectories~\citep{sutton1999policy}. Gradients from this policy loss only update the policy parameters $\psi$. They are not propagated back to the inference network $q_\phi$ to ensure each component has a clear and distinct objective.

\paragraph{Inference network training ($q_{\phi}$).}
For the per-step rewards $R_t$ to be meaningful, the inference network $q_{\phi}$ must provide accurate estimates of posteriors or predictive distributions at \emph{each intermediate step} $t$ of the acquisition sequence, not just at the final step $T$. Consequently, the practical training objective for $q_{\phi}$, denoted $\mathcal{L}_{\text{NLL}}(\phi)$, encourages this step-wise accuracy.
In practice, training proceeds in episodes. For each episode:
(1)  A ground truth parameter set $\theta$ is sampled from the prior $p(\theta)$.
(2)  A target specifier $\xi$ is sampled from $p(\xi)$.
(3)  If the target is predictive ($\xi = \xi_{\pt}^{\yt}$), $M$ target inputs $\{\xt_m\}_{m=1}^M$ are sampled from $\pt(\xt)$, and their corresponding true outcomes $\{\yt_m\}_{m=1}^M$ are simulated using $\theta$, similarly as \citep{smith2023prediction}.
The negative log-likelihood loss $\mathcal{L}_\text{NLL}(\phi)$ for $q_\phi$ in an episode is then computed by averaging over the $T$ acquisition steps and predictions, using the Monte Carlo estimates of the objectives defined in Eqs.~\ref{eq:obj_posterior} and \ref{eq:obj_predictive}:
\begin{equation} \label{eq:nll_aline}
    \mathcal{L}_\text{NLL}(\phi) \approx \begin{cases}
    -\frac{1}{T} \frac{1}{|S|}\sum_{t=1}^T\sum_{l\in S} \log q(\theta_l \given \HH_t), & \text{if }  \xi = \xi^\theta_{S} \\
    -\frac{1}{T}\frac{1}{M}\sum_{t=1}^T\sum_{m=1}^M \log q(\yt_m \given \xt_m, \HH_t), & \text{if }  \xi = \xi^{\yt}_{\pt}. \\
    \end{cases}
\end{equation}

\paragraph{Joint training.}
To ensure $q_{\phi}$ provides a reasonable reward signal early in training, we employ an initial warm-up phase where only $q_{\phi}$ is trained, with data acquisition $(x_t, y_t)$ guided by random actions instead of $\pi_\psi$. After the warm-up, $q_{\phi}$ and $\pi_{\psi}$ are trained jointly. A detailed step-by-step training algorithm is provided in \Cref{app:algorithm}.

\begin{figure}[t] 
    \centering
    \includegraphics[width=0.82\linewidth]{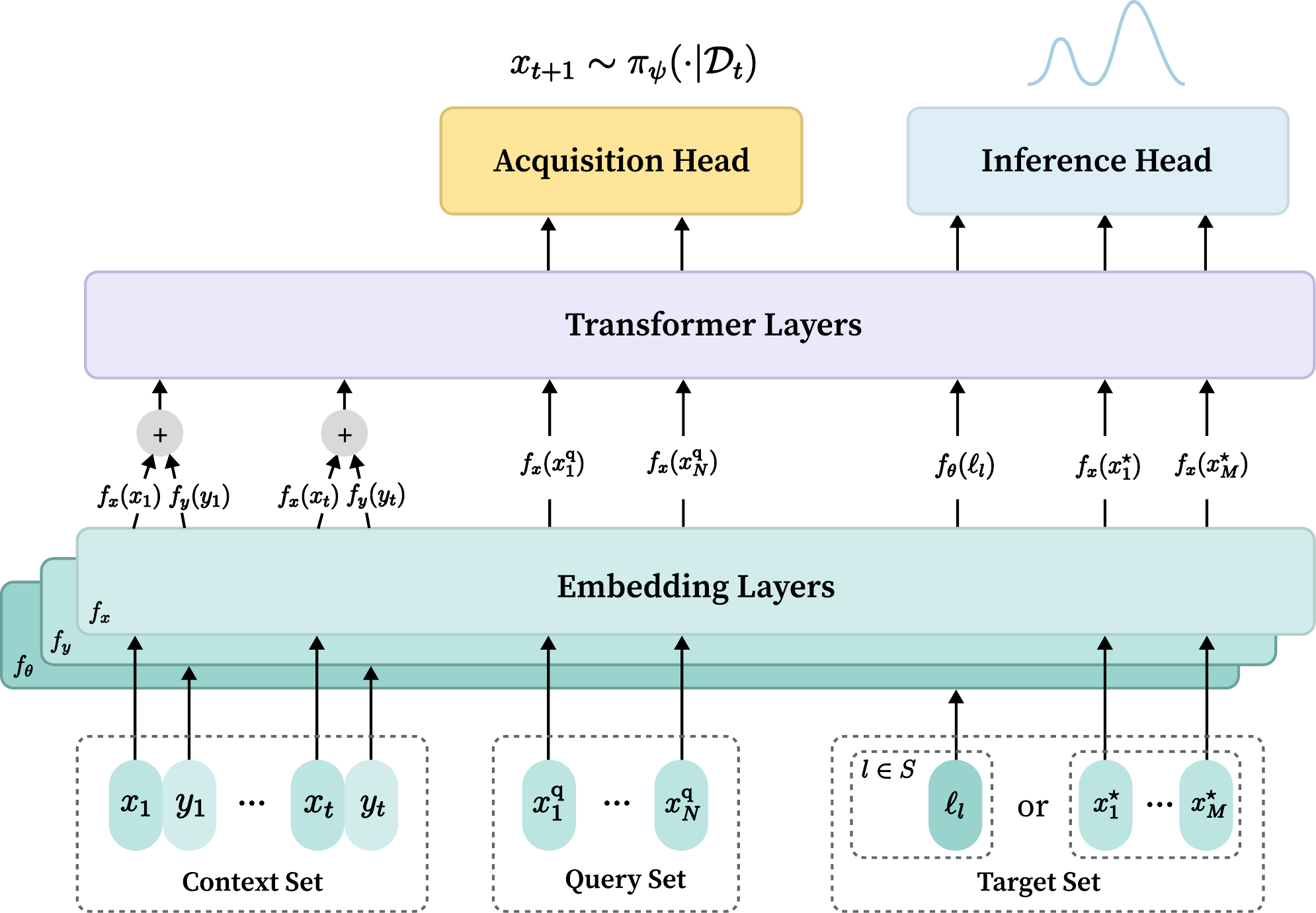}
    \caption{The \aline architecture. The model takes historical observations (context set), candidate points (query set), and the current inference goal (target set) as inputs. These are transformed by embedding layers and subsequently by transformer layers. Finally, an acquisition head determines the next data point to query, while an inference head performs the approximate Bayesian inference.} 
    \label{fig:architecture}
    \vspace{-2mm}
\end{figure}

\subsection{Architecture} \label{sec:architecture}

\aline employs a single, integrated neural architecture based on Transformer Neural Processes (TNPs) \citep{nguyen2022transformer, chang2025amortized}, a network architecture that has been successfully applied to various amortized sequential decision-making settings \citep{huang2025amortized, andersson2023environmental, maraval2024end, hungboformer, zhangpabbo}.
% to concurrently manage historical observations, propose future queries, and condition predictions on specific, potentially varying, inference objectives. 
% This architectural choice is motivated by the recent success of such models in various amortized sequential decision-making settings \citep{huang2025amortized, andersson2023environmental, maraval2024end, hungboformer, zhangpabbo}. 
\aline leverages TNPs to concurrently manage historical observations, propose future queries, and condition predictions on specific, potentially varying, inference objectives.
% TNPs have been explored in various amortized sequential decision-making settings, including decision-aware BED \citep{huang2025amortized}, active learning \citep{andersson2023environmental}, Bayesian optimization (BO) \citep{maraval2024end, hungboformer, zhangpabbo}.
An overview of \aline's architecture is provided in \Cref{fig:architecture}.

The model inputs are structured into three sets. Following standard TNP-based architecture, the \textbf{context set} $\HH_t = \{(x_i, y_i) \}_{i=1}^t$ comprises the history of observations, and the \textbf{target set} $\mathcal{T}$ contains the specific target specifier $\xi$. To facilitate active data acquisition, we incorporate a \textbf{query set} $\mathcal{Q} = \{ x^{\text{q}}_n\}_{n=1}^N$ of candidate points. In this paper, we focus on a discrete pool-based setting for consistency, though it can be straightforwardly extended to continuous design spaces (e.g., \citep{schulman2015trust}). Details regarding the embeddings of these inputs are provided in \Cref{app:architecture}.

Standard transformer attention mechanisms process these embedded representations. Self-attention operates within the context set, capturing dependencies within $\HH_t$. Both the query and target set then employ cross-attention to attend to the processed context set representations. To enable the policy $\pi_\psi$ to dynamically adapt its acquisition strategy based on the specific inference goal $\xi$, we introduce an additional \emph{query-target cross-attention} mechanism to allow the query candidates to directly attend to the target set. This allows the evaluation of each candidate $x^{\text{q}}_n$ to be informed by its relevance to the different potential targets $\xi$. Examples of the attention mask are shown in \Cref{fig:attention_mask}.

Finally, two specialized output heads operate on these processed representations. The \textbf{inference head} ($q_\phi$), following ACE \citep{chang2025amortized}, uses a Gaussian mixture to parameterize the approximate posteriors and posterior predictives. The \textbf{acquisition head} ($\pi_\psi$) generates a policy over the query set $\pi_\psi(x_{t+1} | \HH_t)$, drawing on principles from policy-based design methods \citep{huang2025amortized, maraval2024end}. 
% This novel combination within a TNP framework allows \aline to effectively integrate established components for inference and acquisition with a new mechanism for adaptive goal-oriented querying.
This unified design, which leverages a shared transformer backbone with specialized heads, significantly improves parameter efficiency by avoiding the need for separate encoders for the two tasks, leading to faster training and more efficient deployment.

\begin{figure}[t] 
    \centering
    \includegraphics[width=0.92\linewidth]{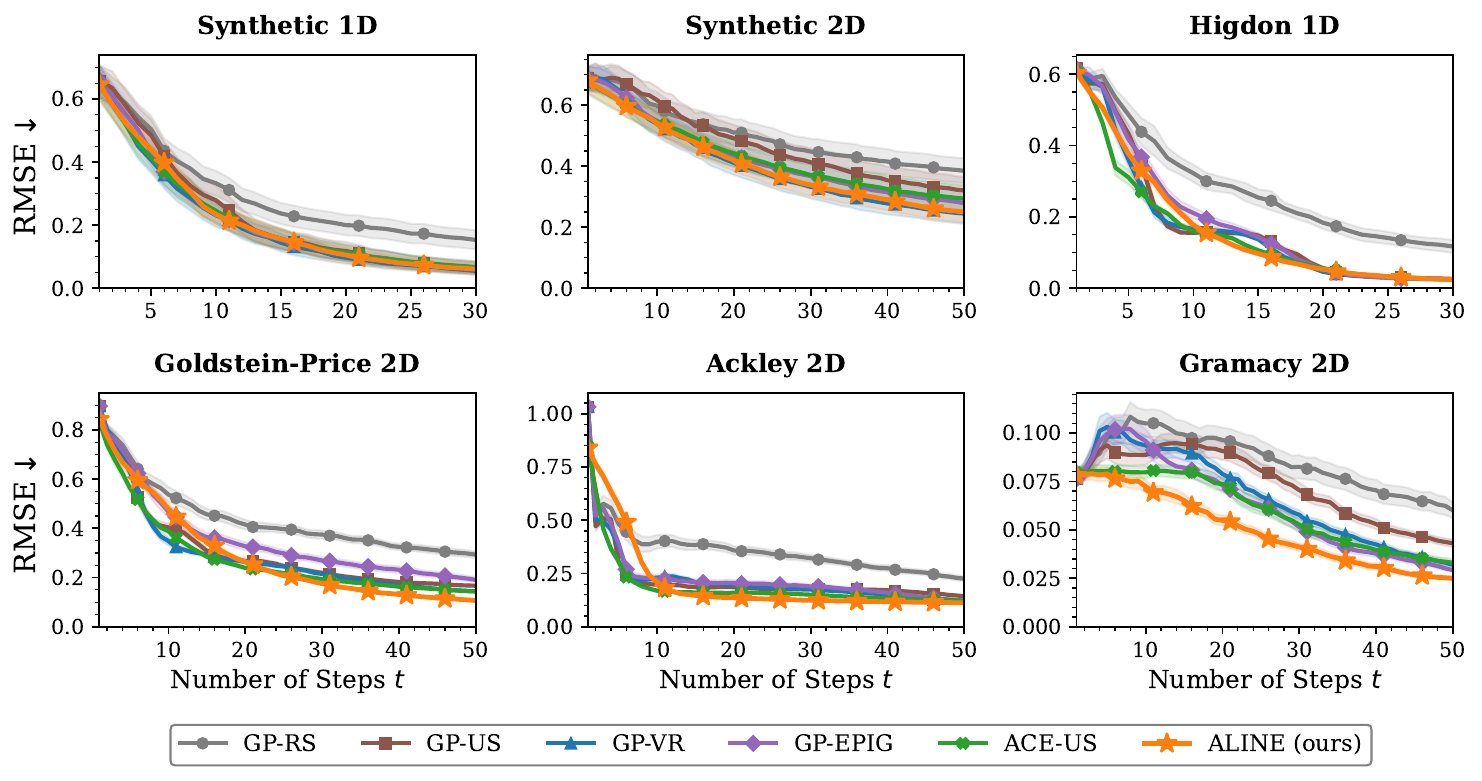}
    % \vspace{-2mm}
    \caption{Predictive performance on active learning benchmark functions (RMSE $\downarrow$). Results show the mean and 95\% confidence interval (CI) across 100 runs.} 
    \label{fig:al_benchmark}
    \vspace{-3mm}
\end{figure}

\vspace{-1mm}
\section{Experiments}
\label{sec:experiments}

We now empirically evaluate \aline's performance in different active data acquisition and amortized inference tasks. We begin with the active learning task in \Cref{sec:active_learning_results}, where we want to efficiently minimize the uncertainty over an unknown function by querying $T$ data points. Then, we test \aline's policy on standard BED benchmarks in \Cref{sec:bed_results}. In \Cref{sec:psychometric_results}, we demonstrate the benefit of \aline's flexible targeting feature in a psychometric modeling task~\citep{wichmann2001psychometric}. Finally, to demonstrate the scalability of \aline, we test it on a high-dimensional task of actively exploring hyperparameter performance landscapes; the results are presented in \Cref{app:hpo}.
The code to reproduce our experiments is available at: \url{https://github.com/huangdaolang/aline}.

\subsection{Active learning for regression and hyperparameter inference}
\label{sec:active_learning_results}

For the active learning task, \aline is trained on a diverse collection of fully \emph{synthetic functions} drawn from Gaussian Process (GP) \citep{rasmussen2006gaussian} priors (see \Cref{app:al_sampling} for details). We evaluate \aline's performance under both \emph{in-distribution} and \emph{out-of-distribution} settings. For the \emph{in-distribution} setting, \aline is evaluated on synthetic functions sampled from the same GP prior that is used during training. In the \emph{out-of-distribution} setting, we evaluate \aline on benchmark functions (Higdon, Goldstein-Price, Ackley, and Gramacy) \emph{unseen} during training, to assess generalization beyond the training regime. 
We compare \aline against non-amortized GP models equipped with standard acquisition functions such as Uncertainty Sampling (GP-US), Variance Reduction (GP-VR) \citep{yu2006active}, EPIG (GP-EPIG) \citep{smith2023prediction}, and Random Sampling (GP-RS). Additionally, we include an amortized neural process baseline, the Amortized Conditioning Engine (ACE) \citep{chang2025amortized}, paired with Uncertainty Sampling (ACE-US), to specifically evaluate the advantage of \aline's learned acquisition policy over using a standard acquisition function with an amortized inference model. The performance metric is the root mean squared error (RMSE) of the predictions on a held-out test set.

% \paragraph{Results.} 
\begin{wrapfigure}{r}{0.29\textwidth}
\vspace{-6mm}
    \begin{center}
        \includegraphics[width=\linewidth]{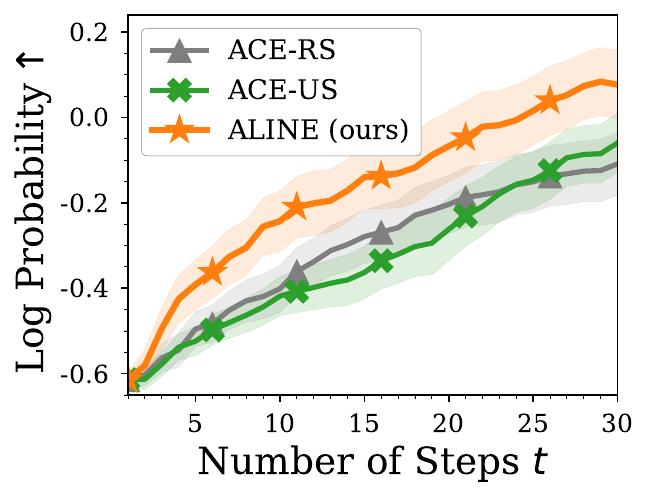}
    \end{center}
    \vspace{-4mm}
    \caption{Hyperparameter inference performance on synthetic GP functions.} 
    \label{fig:al_posterior}
\vspace{-2mm}
\end{wrapfigure}

Results for the active learning task in \Cref{fig:al_benchmark} show that \aline performs comparably to the best-performing GP-based methods for the \emph{in-distribution} setting. Importantly, for the \emph{out-of-distribution} setting, \aline outperforms the baselines in 3 out of the 4 benchmark functions. These results highlight the advantage of \aline's end-to-end learning strategy, which obviates the need for kernel specification using GPs or explicit acquisition function selection. Further evaluations on additional benchmark functions (Gramacy 1D, Branin, Three Hump Camel), visualizations of \aline's sequential querying strategy with corresponding predictive updates, and a comparison of average inference times are provided in \Cref{app:al_additional_exps}.

Additionally, for the \emph{in-distribution} setting, we test \aline's capability to infer the underlying GP's hyperparameters---without retraining, by leveraging \aline's flexible target specification at runtime. For baselines, we use ACE-US and ACE-RS, since ACE \citep{chang2025amortized} is also capable of posterior estimation.
\Cref{fig:al_posterior} shows that \aline yields higher log probabilities of the true parameter value under the estimated posterior at each step compared to the baselines. This is due to the ability to flexibly switch \aline's acquisition strategy to parameter inference, unlike other active learning methods. We also visualize the obtained posteriors in \Cref{app:al_additional_exps}.

\vspace{-1mm}
\subsection{Benchmarking on Bayesian experimental design tasks} \label{sec:bed_results}

We test \aline on two classical BED tasks: Location Finding \citep{sheng2004maximum} and Constant Elasticity of Substitution (CES) \citep{arrow1961capital}, with two- and six-dimensional design space, respectively. As baselines, we include a random design policy, a gradient-based method with variational Prior Contrastive Estimation (VPCE) \citep{foster2020unified}, and three amortized BED methods: Deep Adaptive Design (DAD) \citep{foster2021deep}, vsOED \citep{shen2025variational}, and RL-BOED \citep{blau2022optimizing}. Details of the tasks and the baselines are provided in \Cref{app:bed}.

To evaluate performance, we compute a lower bound of the total EIG, namely the sequential Prior Contrastive Estimation lower bound \citep{foster2021deep}. 
As shown in \Cref{tab:bed}, \aline surpasses all the baselines in the Location Finding task and achieves competitive performance on the CES task, outperforming most other methods, except RL-BOED. Notably, \aline's training time is reduced as its reward is based on the internal posterior improvement and does not require a large number of contrastive samples to estimate sEIG. While \aline's deployment time is slightly higher than MLP-based amortized methods due to the computational cost of its transformer architecture, it remains orders of magnitude faster than non-amortized approaches like VPCE.
Visualizations of \aline's inferred posterior distributions are provided in \Cref{app:bed_additional_exps}.

\begin{table}[!t]
\caption{Results on BED benchmarks. For the EIG lower bound, we report the mean $\pm$ 95\% CI across 2000 runs (200 for VPCE). For deployment time, we use the mean $\pm$ 95\% CI from 20 runs. }  
\centering
\label{tab:bed} 
\scalebox{0.85}{ 
\begin{tabular}{r|ccc|ccc}
\toprule
\multirow{2}{*}{} & \multicolumn{3}{c|}{Location Finding} & \multicolumn{3}{c}{Constant Elasticity of Substitution} \\
\cmidrule(lr){2-4} \cmidrule(lr){5-7}
& \makecell{EIG lower \\ bound ($\uparrow$)} & \makecell{Training \\ time (h)} & \makecell{Deployment \\ time (s)} & \makecell{EIG lower \\ bound ($\uparrow$)} & \makecell{Training \\ time (h)} & \makecell{Deployment \\ time (s)} \\
\midrule
    Random  & \vpmse{5.17}{0.05} & N/A  & N/A   & \vpmse{9.05}{0.26} & N/A  & N/A \\
    VPCE \citep{foster2020unified}    & \vpmse{5.25}{0.22} & N/A  & \vpmse{146.59}{0.09} & \vpmse{9.40}{0.27} & N/A  & \vpmse{788.90}{1.03} \\
    DAD \citep{foster2021deep}    & \vpmse{7.33}{0.06} & 7.24 & \vpmse{0.0001}{0.00} & \vpmse{10.77}{0.15} & 13.70 & \vpmse{0.0001}{0.00} \\
    vsOED \citep{shen2025variational} & \vpmse{7.30}{0.06} & 4.31  & \vpmse{0.0002}{0.00} & \vpmse{12.12}{0.18} & 0.49 & \vpmse{0.0003}{0.00} \\
    RL-BOED \citep{blau2022optimizing} & \vpmse{7.70}{0.06} & 63.29 & \vpmse{0.0003}{0.00} &\vpmse{14.60}{0.10} & 67.28 & \vpmse{0.0004}{0.00} \\
    \aline (ours)  & \vpmse{8.91}{0.04} & 21.20 & \vpmse{0.03}{0.00} & \vpmse{13.50}{0.15} & 13.29 & \vpmse{0.04}{0.00} \\
\bottomrule
\end{tabular}
}
\vspace{-3mm}
\end{table}

\vspace{-1mm}
\subsection{Psychometric model} \label{sec:psychometric_results} 

Our final experiment involves the psychometric modeling task \citep{wichmann2001psychometric}---a fundamental scenario in behavioral sciences, from neuroscience to clinical settings~\citep{gilaie2013neuroanatomical, powers2017pavlovian, xu2024does}, where the goal is to infer parameters governing an observer's responses to varying stimulus intensities. The psychometric function used here is characterized by four parameters: threshold, slope, guess rate, and lapse rate; see \Cref{app:psychometric_model} for details.  Different research questions in psychophysics necessitate focusing on different parameter subsets. For instance, studies on perceptual sensitivity primarily target precise estimation of threshold and slope, while investigations into response biases or attentional phenomena might focus on the guess and lapse rates. This is where \aline's unique flexible querying strategy can be used to target specific parameter subsets of interest.

We compare \aline with two established adaptive psychophysical methods: QUEST+ \citep{watson2017quest+}, which targets all parameters simultaneously, and Psi-marginal \citep{prins2013psi}, which can marginalize over nuisance parameters to focus on a specified subset, a non-amortized gold-standard method for flexible acquisition. 
We evaluate scenarios targeting either the threshold and slope parameters or the guess and lapse rates.
Details of the baselines and the experimental setup are in \Cref{app:psychometric_exp_details}.

\begin{figure}[t] 
% \vspace{-1mm}
    \centering
    \includegraphics[width=0.98\linewidth]{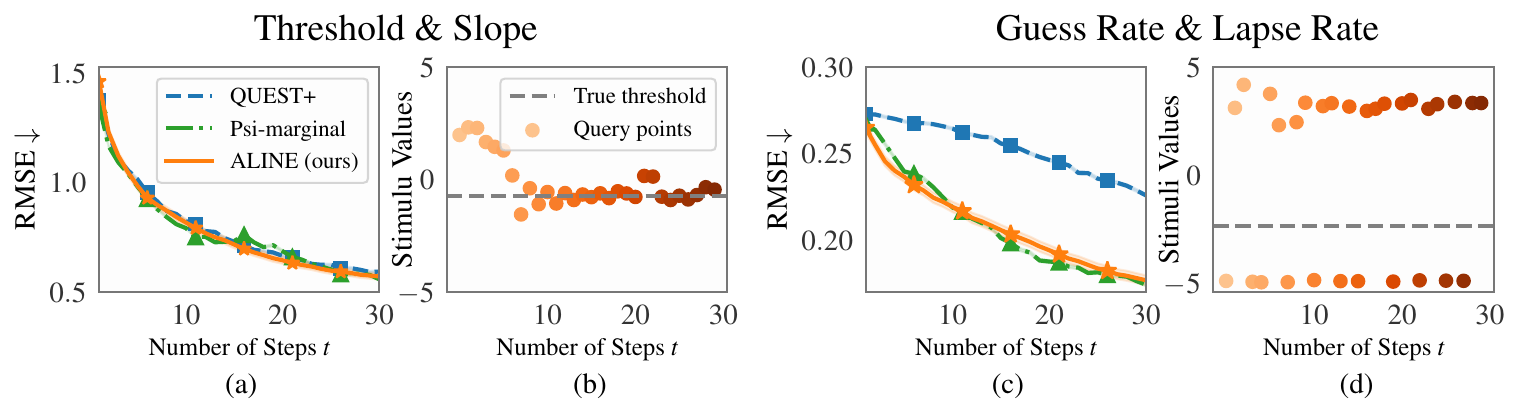}
    % \vspace{-1mm}
    \caption{Results on psychometric model. RMSE (mean $\pm$ 95\% CI) when targeting (a) threshold \& slope and (c) guess \& lapse rates, with \aline's corresponding query strategies shown in (b) and (d).} 
    \label{fig:psychometric}
\end{figure}

\Cref{fig:psychometric} shows the results. When targeting threshold and slope (\Cref{fig:psychometric}a), which are generally easier to infer, \aline achieves results comparable to baselines. When targeting guess and lapse rates (\Cref{fig:psychometric}c), QUEST+ performs sub-optimally as its experimental design strategy is dominated by the more readily estimable threshold and slope parameters. In contrast, both Psi-marginal and \aline lead to significantly better inference than QUEST+ when explicitly targeting guess and lapse rates.
Moreover, \aline offers a $10\times$ speedup over these non-amortized methods (See \Cref{app:additional_exps}).
We also visualize the query strategies adopted by \aline in the two cases: when targeting threshold and slope (\Cref{fig:psychometric}b), stimuli are concentrated near the estimated threshold. Conversely, when targeting guess and lapse rates (\Cref{fig:psychometric}d), \aline appropriately selects `easy' stimuli at extreme values where mistakes can be more readily attributed to random behavior (governed by lapses and guesses) rather than the discriminative ability of the subject (governed by threshold and slope). To further demonstrate \aline's runtime flexibility, we conduct two additional investigations detailed in \Cref{app:psychometric_additional_exps}. First, we show that a single pre-trained ALINE model can dynamically switch its acquisition target mid-experiment. Second, we validate its ability to generalize to novel combinations of targets that were not seen during training, showing the effectiveness of the query-target cross-attention mechanism.

\vspace{-2mm}
\section{Conclusion} \label{sec:conclusion}

We introduced \aline, a unified amortized framework that seamlessly integrates active data acquisition with Bayesian inference. \aline dynamically adapts its strategy to target selected inference goals, offering a flexible and efficient solution for Bayesian inference and active data acquisition.
 
\paragraph{Limitations \& future work.}
Currently, \aline operates with pre-defined, fixed priors, necessitating re-training for different prior specifications. Future work could explore prior amortization \citep{chang2025amortized, whittle2025distribution}, to allow for dynamic prior conditioning. As \aline estimates marginal posteriors, extending this to joint posterior estimation, potentially via autoregressive modeling \citep{bruinsmaautoregressive, hassan2025efficient}, is a promising direction.
Note that, at deployment, we may encounter observations that differ substantially from the training data, leading to degradation in performance. This issue can potentially be tackled by combining \aline with robust approaches such as \citep{forsterimproving, huang2023learning}.
Lastly, \aline's current architecture is tailored to fixed input dimensionalities and discrete design spaces, a common practice with TNPs \citep{nguyen2022transformer, maraval2024end, zhangpabbo, huang2025amortized}. Generalizing \aline to be dimension-agnostic \citep{leedimension} and to support continuous experimental designs \citep{schulman2015trust} are valuable avenues for future research.

\subsection*{Acknowledgements}
DH, LA and SK were supported by the Research Council of Finland (Flagship programme: Finnish Center for Artificial Intelligence FCAI, 359207). The authors acknowledge the research environment provided by ELLIS Institute Finland.
LA was also supported by Research Council of Finland grants 358980 and 356498. SK was also supported by the UKRI Turing AI World-Leading Researcher Fellowship, [EP/W002973/1]. AB was supported by the Research Council of Finland grant no. 362534. The authors wish to thank Aalto Science-IT project, and CSC–IT Center for Science, Finland, for the computational and data storage resources provided.

\bibliography{bibliography}
\bibliographystyle{apalike}

\newpage

\appendix

\setcounter{figure}{0}
\setcounter{table}{0}
\setcounter{equation}{0}
\renewcommand{\thefigure}{A\arabic{figure}}
\renewcommand{\theequation}{A\arabic{equation}}
\renewcommand{\thetable}{A\arabic{table}}
\renewcommand{\theHfigure}{A\arabic{figure}}
\renewcommand{\theHtable}{A\arabic{table}}

{
\begin{center}
\Large
    \textbf{Appendix}
\end{center}
}

The appendix is organized as follows: 
\begin{itemize}
    \item In \Cref{app:proof}, we provide detailed derivations and proofs for the theoretical claims made regarding information gain and variational bounds.
    \item In \Cref{app:method}, we present the complete training algorithm and the specifics of the \aline model.
    \item In \Cref{app:exp_details}, we provide comprehensive details for each experimental setup, including task descriptions and baseline implementations.
    \item In \Cref{app:additional_exps}, we present additional experimental results, including further visualizations, performance on more benchmarks, and analyses of inference times.
    \item In \Cref{app:resource_and_license}, we provide an overview of the computational resources and software dependencies for this work.
\end{itemize}

\section{Proofs of theoretical results} \label{app:proof}

\subsection{Derivation of total EIG for $\theta_S$} \label{app:sEIG_definition}

Following Eq.~\ref{eq:true_total_eig}, we can write the expression for the total expected information gain sEIG$_{\theta_S}$ about a parameter subset $\theta_S \subseteq \theta$ given data $\mathcal{D}_T$ generated under policy $\pi_\psi$ as:
\begin{align}
    \text{sEIG}_{\theta_S}(\psi) = H[p(\theta_S)] + \underbrace{\mathbb{E}_{p(\theta_S, \mathcal{D}_T \given \pi_\psi)}[\log p(\theta_S \given \mathcal{D}_T)]}_{E_1} \label{eq:app_sEIG_theta_S},
\end{align}
where $p(\theta_S)$ is the marginal prior for $\theta_S$, and $p(\theta_S, \mathcal{D}_T \given\pi_\psi)$ is the joint distribution of $\theta_S$ and $\mathcal{D}_T$ under $\pi_\psi$. Now, let $\theta_R = \theta \setminus \theta_S$ be the remaining component of $\theta$ not included in $\theta_S$. Then, we can express $E_1$ from Eq.~\ref{eq:app_sEIG_theta_S} as
\begin{align}
    E_1 &= \int \log p(\theta_S \given \mathcal{D}_T) p(\theta_S, \mathcal{D}_T \given \pi_\psi) \text{d}\theta_S \nonumber\\
    &= \int \log p(\theta_S \given \mathcal{D}_T) \left[ \int p(\theta_S, \theta_R, \mathcal{D}_T \given \pi_\psi) \text{d}\theta_R \right]\text{d}\theta_S \nonumber \\
    &= \int \log p(\theta_S \given \mathcal{D}_T) \int p(\theta, \mathcal{D}_T \given \pi_\psi) \text{d}\theta \nonumber \\
    &= \mathbb{E}_{p(\theta, \mathcal{D}_T \given \pi_\psi)}[\log p(\theta_S \given \mathcal{D}_T)]. \label{eq:app_seig_theta_final}
\end{align}
Plugging the above expression in Eq.~\ref{eq:app_sEIG_theta_S} and noting that $p(\theta, \mathcal{D}_T \given \pi_\psi) = p(\theta)p(\mathcal{D}_T \given \theta, \pi_\psi)$, we arrive at the expression for sEIG$_{\theta_S}$ in Eq.~\ref{eq:total_eig}.

\subsection{Proof of Proposition~\ref{prop:sEPIG_definition}}
\label{proof:sEPIG}

\begin{proposition*}[Proposition~\ref{prop:sEPIG_definition}]
    The total expected predictive information gain for a design policy $\pi_\psi$ over a data trajectory of length $T$ is:
    \begin{align*}
        \text{sEPIG}(\psi) &:= \mathbb{E}_{\pt(\xt) p(\HH_T \given \pi_\psi)} [\text{H}[p(\yt\given \xt)] - \text{H}[p(\yt\given \xt, \HH_T)]]\\
        &\,= \mathbb{E}_{p(\theta)p(\HH_T \given \pi_\psi, \theta)\pt(\xt) p(\yt \given \xt, \theta)} \left[\log p(\yt\given \xt,\HH_{T})\right] + \mathbb{E}_{\pt(\xt)}[H[p(\yt \given \xt)]].
    \end{align*}
\end{proposition*}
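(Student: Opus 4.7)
The plan is to verify the proposed identity by unpacking the definition of sEPIG and manipulating each of the two resulting terms separately, following the template of Theorem~1 in \citep{foster2021deep} adapted to the predictive setting. The key idea is that the entropy $H[p(\yt \given \xt, \HH_T)]$ can be expanded as an expectation under $p(\yt \given \xt, \HH_T)$, after which the joint sampling distribution over $(\xt, \HH_T, \yt)$ can be re-expressed by marginalizing over the underlying parameter $\theta$, exploiting the conditional independence structure of the generative model.

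First, I would split the expectation defining sEPIG by linearity into the $H[p(\yt \given \xt)]$ term and the $-H[p(\yt \given \xt, \HH_T)]$ term. For the first term, the integrand depends only on $\xt$ and not on $\HH_T$, so the outer expectation over $p(\HH_T \given \pi_\psi)$ collapses, leaving $\mathbb{E}_{\pt(\xt)}[H[p(\yt \given \xt)]]$, which matches the second summand in the claim. For the second term, I would write
\begin{equation*}
    -\mathbb{E}_{\pt(\xt) p(\HH_T \given \pi_\psi)}\bigl[H[p(\yt \given \xt, \HH_T)]\bigr] = \mathbb{E}_{\pt(\xt) p(\HH_T \given \pi_\psi) p(\yt \given \xt, \HH_T)}\bigl[\log p(\yt \given \xt, \HH_T)\bigr].
\end{equation*}

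Next, I would rewrite the sampling distribution. Since $\xt$ is drawn independently of the policy and the context, $\pt(\xt) p(\HH_T \given \pi_\psi) p(\yt \given \xt, \HH_T)$ coincides with the joint $\pt(\xt) p(\HH_T, \yt \given \pi_\psi, \xt)$. Introducing $\theta$ via marginalization and using the conditional independence $\yt \perp \HH_T \mid \theta, \xt$ (both are generated i.i.d.\ through the likelihood given $\theta$), this joint factorizes as $\pt(\xt) \int p(\theta) p(\HH_T \given \pi_\psi, \theta) p(\yt \given \xt, \theta) \, d\theta$. Substituting this back in and noting that $\log p(\yt \given \xt, \HH_T)$ is the integrand, I obtain the first summand of the claimed identity.

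The main obstacle I anticipate is making the conditional independence and the factorization of $p(\HH_T, \yt \given \pi_\psi, \theta, \xt) = p(\HH_T \given \pi_\psi, \theta)\, p(\yt \given \xt, \theta)$ fully explicit, since the policy $\pi_\psi$ sequentially adapts to $\HH_{t-1}$ while $\yt$ is drawn from the same likelihood but at an independently sampled target input. Once this factorization is justified by the graphical structure of the generative process, the rest of the argument is an application of Fubini and the identity $-H[p] = \mathbb{E}_p[\log p]$. The derivation closely parallels the argument sketched in \Cref{app:sEIG_definition} for $\text{sEIG}_{\theta_S}$, with the parameter target $\theta_S$ replaced by the predictive target $(\xt, \yt)$ drawn from $\pt(\xt) p(\yt \given \xt, \theta)$.
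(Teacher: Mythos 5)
Your proposal is correct and follows essentially the same route as the paper's proof: expand the negative entropy as $\mathbb{E}_{p(\yt \given \xt, \HH_T)}[\log p(\yt \given \xt, \HH_T)]$, identify $p(\HH_T \given \pi_\psi)\,p(\yt \given \xt, \HH_T)$ with the joint $p(\HH_T, \yt \given \pi_\psi, \xt)$, and then marginalize over $\theta$ using the conditional independence of $\HH_T$ and $\yt$ given $\theta$ (which the paper, citing \citep{smith2023prediction}, likewise takes as a modeling assumption rather than deriving from the sequential structure). No gaps; your anticipated obstacle is exactly the step the paper also resolves by invoking that assumption.
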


\begin{proof}
    Let $\pt(\xt)$ be the target distribution over inputs $\xt$ for which we want to improve predictive performance. Let $\yt$ be the corresponding target output. The single-step EPIG for acquiring data $(x,y)$ measures the expected reduction in uncertainty (entropy) about $\yt$ for a random target $\xt \sim \pt(\xt)$:
    \begin{equation*}
        \text{EPIG}(x) = \mathbb{E}_{p_{\star}(\xt)p(y \given x)} [ H[p(\yt \given \xt)] - H[p(\yt \given \xt, x, y)] ].
    \end{equation*}
    Following Theorem 1 in \citep{foster2021deep}, the total EPIG, is the total expected reduction in predictive entropy from the initial prediction $p(\yt \given \xt)$ to the final prediction based on the full history $p(\yt \given \xt, \HH_T)$:
    \begin{align}
    % \begin{split}
        \text{sEPIG}(\psi)&= \mathbb{E}_{p_{\star}(\xt) p(\HH_T \given \pi_\psi)} [ H[p(\yt \given \xt)] - H[p(\yt \given \xt, \HH_T)] ] \label{eq:app_sepig_1} \\
            &= \mathbb{E}_{p_{\star}(\xt)p(\HH_T \given \pi_\psi)} [\mathbb{E}_{p(\yt \given \xt, \HH_T)}[\log p(\yt \given \xt,\HH_T)]] + \mathbb{E}_{p_{\star}(\xt)}[H[p(\yt \given \xt)]] \label{eq:app_sepig_2} \\
            &= \mathbb{E}_{p_{\star}(\xt)p(\HH_T, \yt \given \pi_\psi, \xt)} [\log p(\yt \given \xt,\HH_T)] + \mathbb{E}_{p_{\star}(\xt)}[H[p(\yt \given \xt)]]. \label{eq:app_sepig_3}
    % \end{split}
    \end{align}
    Here, Eq.~\ref{eq:app_sepig_1} follows from conditioning EPIG on the entire trajectory $\HH_T$ instead of a single data point $y$, Eq.~\ref{eq:app_sepig_2} follows from the definition of entropy $H[\cdot]$, and Eq.~\ref{eq:app_sepig_3} follows from noting that $p(\HH_T, \yt \given \pi_\psi, \xt) = p(\HH_T \given \pi_\psi) p(\yt \given \xt, \HH_T)$.
    Next, we combine the expectations and express the joint distribution $p(\HH_T, \yt \given \pi_\psi, \xt)=\int p(\theta) p(\HH_T \given \pi_\psi, \theta) p(\yt \given \xt, \theta) d\theta$, where, following \citep{smith2023prediction}, we assume conditional independence between $\HH_T$ and $\yt$ given $\theta$. This yields:
    \begin{equation*}
        \text{sEPIG}(\psi) = \mathbb{E}_{p(\theta)p(\HH_T  \given  \pi_\psi, \theta)\pt(\xt) p(\yt  \given  \xt, \theta)} \left[\log p(\yt \given \xt,\HH_{T})\right] + \mathbb{E}_{\pt(\xt)}[H[p(\yt \given \xt)]],
    \end{equation*}
    which completes our proof.
\end{proof}

\subsection{Proof of Proposition \ref{proposition:combined}}
\label{app:proof_eig}
\begin{proposition*}[Proposition~\ref{proposition:combined}]

    Let the policy $\pi_\psi$ generate the trajectory $\HH_T$. With $q_\phi(\theta_S \given \mathcal{D_T})$ approximating  $p(\theta_S \given \mathcal{D_T})$, and $q_\phi(\yt \given \xt, \HH_T)$ approximating $p(\yt \given \xt, \HH_T)$, we have $\mathcal{J}_{S}^\theta(\psi) \leq \text{sEIG}_{\theta_S}(\psi)$ and $\mathcal{J}_{p_\star}^{\yt}(\psi) \leq \text{sEPIG}(\psi)$. Moreover, 
    \begin{align}
        \text{sEIG}_{\theta_S}(\psi) - \mathcal{J}_{S}^\theta(\psi) &= \mathbb{E}_{p(\HH_T\given \pi_\psi)} [ \text{KL}( p(\theta_S\given \HH_T) || q_\phi(\theta_S\given \HH_T) )], \quad \text{and}\\
        \text{sEPIG}(\psi) - \mathcal{J}^{\yt}_{p_\star}(\psi) &= \mathbb{E}_{\pt(\xt) p(\HH_T \given \pi_\psi)} [ \text{KL}( p(\yt \given \xt, \HH_T) || q_\phi(\yt \given \xt, \HH_T) ) ].
    \end{align}

\end{proposition*}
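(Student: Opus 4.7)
The plan is to derive the two gap identities directly, since the inequalities $\mathcal{J}_{S}^\theta(\psi) \leq \text{sEIG}_{\theta_S}(\psi)$ and $\mathcal{J}_{p_\star}^{\yt}(\psi) \leq \text{sEPIG}(\psi)$ then follow immediately from non-negativity of the KL divergence. Both cases follow the same template: form the difference, cancel the entropy terms that appear identically in $\text{sEIG}_{\theta_S}$ / $\text{sEPIG}$ and in $\mathcal{J}_S^\theta$ / $\mathcal{J}_{p_\star}^{\yt}$, marginalize the simulator measure down to a distribution in which the true posterior (or posterior predictive) naturally appears, and then recognize the inner expectation as a KL divergence between the true and approximate distribution.

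For the parameter case, I would first subtract $\mathcal{J}_S^\theta(\psi)$ from $\text{sEIG}_{\theta_S}(\psi)$, at which point the $H[p(\theta_S)]$ contributions cancel and the difference reduces to the expectation of $\log p(\theta_S \mid \HH_T) - \log q_\phi(\theta_S \mid \HH_T)$ under $p(\theta)\,p(\HH_T \mid \pi_\psi,\theta)$. Applying the same marginalization over $\theta_R = \theta \setminus \theta_S$ used in \Cref{app:sEIG_definition}, this measure collapses to $p(\theta_S, \HH_T \mid \pi_\psi)$, which factorizes as $p(\HH_T \mid \pi_\psi)\,p(\theta_S \mid \HH_T)$. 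The outer expectation is then over $p(\HH_T \mid \pi_\psi)$, and the inner expectation over $p(\theta_S \mid \HH_T)$ of the log-ratio is precisely $\text{KL}(p(\theta_S \mid \HH_T)\,\|\,q_\phi(\theta_S \mid \HH_T))$. Recalling the mean-field form $q_\phi(\theta_S \mid \HH_T) = \prod_{l \in S} q_\phi(\theta_l \mid \HH_T)$ used in \Cref{eq:obj_posterior} ensures the log-sum over $S$ in $\mathcal{J}_S^\theta$ aligns with $\log q_\phi(\theta_S \mid \HH_T)$.

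The predictive case follows by an analogous manipulation. After subtracting $\mathcal{J}_{p_\star}^{\yt}$ from $\text{sEPIG}$, the $\mathbb{E}_{\pt(\xt)}[H[p(\yt \mid \xt)]]$ terms cancel and the difference becomes the expectation of $\log p(\yt \mid \xt, \HH_T) - \log q_\phi(\yt \mid \xt, \HH_T)$ under $p(\theta)\,p(\HH_T \mid \pi_\psi,\theta)\,\pt(\xt)\,p(\yt \mid \xt,\theta)$. Marginalizing $\theta$ and invoking conditional independence $\HH_T \perp \yt \mid \theta$ (as in the proof of \Cref{prop:sEPIG_definition}), the joint over $(\HH_T, \yt)$ given $\xt$ and $\pi_\psi$ factorizes as $p(\HH_T \mid \pi_\psi)\,p(\yt \mid \xt, \HH_T)$, and the inner expectation of the log-ratio over $p(\yt \mid \xt, \HH_T)$ is exactly $\text{KL}(p(\yt \mid \xt, \HH_T)\,\|\,q_\phi(\yt \mid \xt, \HH_T))$.

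I do not expect a real obstacle, but the subtle point worth stating carefully is that both $p(\theta_S \mid \HH_T)$ and $p(\yt \mid \xt, \HH_T)$ are independent of $\pi_\psi$ once we condition on $\HH_T$: the policy affects only which designs are queried, not the generative conditional $p(y \mid x, \theta)$, so after conditioning on $\HH_T$ the policy drops out of the posterior. Once the two gap identities are established, non-negativity of the KL divergence yields both inequalities, completing the proof.
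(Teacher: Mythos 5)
Your proposal is correct and follows essentially the same route as the paper's proof: cancel the entropy terms, marginalize the simulator measure to $p(\HH_T \given \pi_\psi)\,p(\theta_S \given \HH_T)$ (resp.\ $p(\HH_T \given \pi_\psi)\,p(\yt \given \xt, \HH_T)$ via the conditional independence of $\HH_T$ and $\yt$ given $\theta$), identify the inner expectation of the log-ratio as a KL divergence, and invoke its non-negativity. As a minor point, your use of the mean-field product $q_\phi(\theta_S \given \HH_T) = \prod_{l \in S} q_\phi(\theta_l \given \HH_T)$ is the correct reading of the factorization (the paper's proof writes it with a sum, which is a typo).
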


\begin{proof}
    Using the expressions for $\text{sEIG}_{\theta_S}$ and $\mathcal{J}_{S}^\theta$ from Eq.~\ref{eq:total_eig} and Eq.~\ref{eq:bed_objective}, respectively, and noting that $\log q_\phi(\theta_S \given \HH_T) = \sum_{l \in S} \log q_\phi(\theta_l \given \HH_T)$, we can write the expression for $\text{sEIG}_{\theta_S}(\psi) - \mathcal{J}_{S}^\theta(\psi)$ as:
    \begin{align}
    % \begin{split}
        \text{sEIG}_{\theta_S}(\psi) - \mathcal{J}_{S}^\theta(\psi) &= \mathbb{E}_{p(\theta)p(\HH_T \given \pi_\psi, \theta)} [\log p(\theta_S \given \HH_T)] - \mathbb{E}_{p(\theta)p(\HH_T \given \pi_\psi, \theta)} [\log q_\phi(\theta_S \given \HH_T)] \nonumber \\
        &= \mathbb{E}_{p(\HH_T, \theta \given \pi_\psi)} \left[ \log \frac{p(\theta_S \given \HH_T)}{q_\phi(\theta_S \given \HH_T)} \right] \label{eq:app_prop2_2} \\
        &= \mathbb{E}_{p(\HH_T, \theta_S \given \pi_\psi)} \left[ \log \frac{p(\theta_S \given \HH_T)}{q_\phi(\theta_S \given \HH_T)} \right] \label{eq:app_prop2_3}\\
        &= \mathbb{E}_{p(\HH_T \given \pi_\psi)} \left[ \mathbb{E}_{p(\theta_S \given \HH_T)} \left[ \log \frac{p(\theta_S \given \HH_T)}{q_\phi(\theta_S \given \HH_T)} \right] \right] \label{eq:app_prop2_4} \\
        &= \mathbb{E}_{p(\HH_T \given \pi_\psi)}[\text{KL}( p(\theta_S \given \HH_T) || q_\phi(\theta_S \given \HH_T))]. \label{eq:app_prop2_5}
    \end{align}
Here, Eq.~\ref{eq:app_prop2_2} follows from the fact $p(\theta)p(\HH_T \given \pi_\psi, \theta) = p(\HH_T, \theta \given \pi_\psi)$, Eq.~\ref{eq:app_prop2_3} follows from Eq.~\ref{eq:app_seig_theta_final}, Eq.~\ref{eq:app_prop2_4} follows from the fact that $p(\HH_T, \theta_S \given \pi_\psi) = p(\HH_T \given \pi_\psi) p(\theta_S \given \HH_T)$, and Eq.~\ref{eq:app_prop2_5} follows from the definition of KL divergence.

    Since the KL divergence is always non-negative ($\text{KL}(P||Q) \ge 0$), its expectation over trajectories $p(\HH_T \given \pi_\psi)$ must also be non-negative. Therefore:
    \begin{equation}
        \mathcal{J}_{S}^\theta(\psi) \le \text{sEIG}_{\theta_S}(\psi).
    \end{equation}

    Now, we consider the difference between $\text{sEPIG}(\psi)$ and $\mathcal{J}_{p_\star}^{\yt}(\psi)$:
    \begin{equation}
    \begin{split}
        \text{sEPIG}(\psi) -\mathcal{J}_{p_\star}^{\yt}(\psi) &= \mathbb{E}_{p(\theta) p(\HH_T \given \pi_\psi, \theta) \pt(\xt) p(\yt \given \xt, \theta)} \left[ \log \frac{p(\yt \given \xt, \HH_T)}{q_\phi(\yt \given \xt,\HH_T)} \right] \\
        &= \mathbb{E}_{\pt(\xt) p(\HH_T \given \pi_\psi)} \left[ \mathbb{E}_{p(\yt \given \xt, \HH_T)} \left[ \log \frac{p(\yt \given \xt, \HH_T)}{q_\phi(\yt \given \xt, \HH_T)} \right] \right].
    \end{split}
    \end{equation}
    Similar to the previous case, the inner expectation is the definition of the KL divergence between the true posterior predictive $p(\yt \given \xt, \HH_T)$ and the variational approximation $q_\phi(\yt \given \xt, \HH_T)$:
    \begin{equation}
        \text{sEPIG}(\psi) -\mathcal{J}_{p_\star}^{\yt}(\psi) = \mathbb{E}_{\pt(\xt) p(\HH_T \given \pi_\psi)}[\text{KL}( p(\yt \given \xt, \HH_T) || q_\phi(\yt \given \xt, \HH_T))].
    \end{equation}
    Since the KL divergence is always non-negative, therefore:
    \begin{equation}
        \mathcal{J}_{p_\star}^{\yt}(\psi) \le \text{sEPIG}(\psi),
    \end{equation}
    which completes the proof.
\end{proof}

\section{Further details on \aline} \label{app:method}

\subsection{Training algorithm} \label{app:algorithm}

\begin{algorithm}[H]
\caption{\aline Training Procedure}
\label{alg:aline}
\begin{algorithmic}[1]
\State \textbf{Input:} Prior $p(\theta)$, likelihood $p(y \given x, \theta)$, target distribution $p(\xi)$, query horizon $T$, total training episodes $E_{max}$, warm-up episodes $E_{warm}$.

\State \textbf{Output:} Trained \aline model ($q_\phi, \pi_\psi$).

\For{{epoch = 1 to $E_{max}$}}
    \State Sample parameters $\theta \sim p(\theta)$.
    \State Sample target specifier set $\xi \sim p(\xi)$ and corresponding targets $\theta_S$ or $\{\xt_m, \yt_m \}_{m=1}^M$.
    \State Initialize candidate query set $\mathcal{Q}$.
    
    \For{$t = 1$ to $T$}
        \If{epoch $\le$ $E_{warm}$}
            \State Select next query $x_t$ uniformly at random from $\mathcal{Q}$.
        \Else
            \State Select next query $x_t \sim \pi_\psi(\cdot  \given  \HH_{t-1}, \xi)$ from $\mathcal{Q}$.
        \EndIf

        \State Sample outcome $y_t \sim p(y \given x_t, \theta)$.
        \State Update history $\HH_t \leftarrow \HH_{t-1} \cup \{(x_t, y_t)\}$.
        \State Update query set $\mathcal{Q} \leftarrow \mathcal{Q} \setminus \{x_t\}$.
        \If{epoch $\le$ $E_{warm}$}
            \State $\mathcal{L} = \mathcal{L}_{\text{NLL}}$ (Eq.~\ref{eq:nll_aline})
        \Else
            \State Calculate reward $R_t$ (Eq.~\ref{eq:reward}) 
            \State $\mathcal{L} = \mathcal{L}_{\text{NLL}}$ (Eq.~\ref{eq:nll_aline}) $+ \mathcal{L}_{\text{PG}}$ (Eq.~\ref{eq:policy_gradient_loss})
        \EndIf
        \State Update \aline using $\mathcal{L}$.
    \EndFor
\EndFor
\end{algorithmic}
\end{algorithm}

\subsection{Architecture and training details} \label{app:architecture}

In \aline, the data is first processed by different embedding layers. Inputs (context $x_i$, query candidates $x^{\text{q}}_n$, target locations $\xt_k$) are passed through a shared nonlinear embedder $f_x$. Observed outcomes $y_i$ are embedded using a separate embedder $f_y$. For discrete parameters, we assign a unique indicator $\ell_l$ to each parameter $\theta_l$, which is then associated with a unique, learnable embedding vector, denoted as $f_\theta(\ell_l)$. We compute the final context embedding by summing the outputs of the respective embedders: $E^{\HH_t} = \{(f_x(x_i) + f_y(y_i))\}_{i=1}^t$. Query and target sets are embedded as $E^\mathcal{Q}=\{(f_x(x^{\text{q}}_n)) \}_{n=1}^N$ and $E^\mathcal{T}$ (either $\{(f_x(\xt_m))\}_{m=1}^M$ or $\{f_\theta(\ell_l)\}_{l\in S}$). Both $f_x$ and $f_y$ are MLPs consisting of an initial linear layer, followed by a ReLU activation function, and a final linear layer. For all our experiments, the embedders use a feedforward dimension of 128 and project inputs to an embedding dimension of 32. 

The core of our architecture is a transformer network. We employ a configuration with 3 transformer layers, each equipped with 4 attention heads. The feedforward networks within each transformer layer have a dimension of 128. The model's internal embedding dimension, consistent across the transformer layers and the output of the initial embedding layers, is 32. These transformer layers process the embedded representations of the context, query, and target sets. The interactions between these sets are governed by specific attention masks, visually detailed in \Cref{fig:attention_mask}, where a shaded element indicates that the token corresponding to its row is permitted to attend to the token corresponding to its column. 

\aline has two specialized output heads. The inference head, responsible for approximating posteriors and posterior predictive distributions, parameterizes a Gaussian Mixture Model (GMM) with 10 components. The embeddings corresponding to the inference targets are processed by 10 separate MLPs, one for each GMM component. Each MLP outputs parameters for its component: a mixture weight, a mean, and a standard deviation. The standard deviations are passed through a Softplus activation function to ensure positivity, and the mixture weights are normalized using a Softmax function. The policy head, which generates a probability distribution over the candidate query points, is a 2-layer MLP with a feedforward dimension of 128. Its output is passed through a Softmax function to ensure that the probabilities of all actions sum to unity. The architecture of \aline is shown in \Cref{fig:architecture}.

\aline is trained using the AdamW optimizer with a weight decay of 0.01. The initial learning rate is set to 0.001 and decays according to a cosine annealing schedule.

\begin{figure}[t] 
    \centering
    \includegraphics[width=0.9\linewidth]{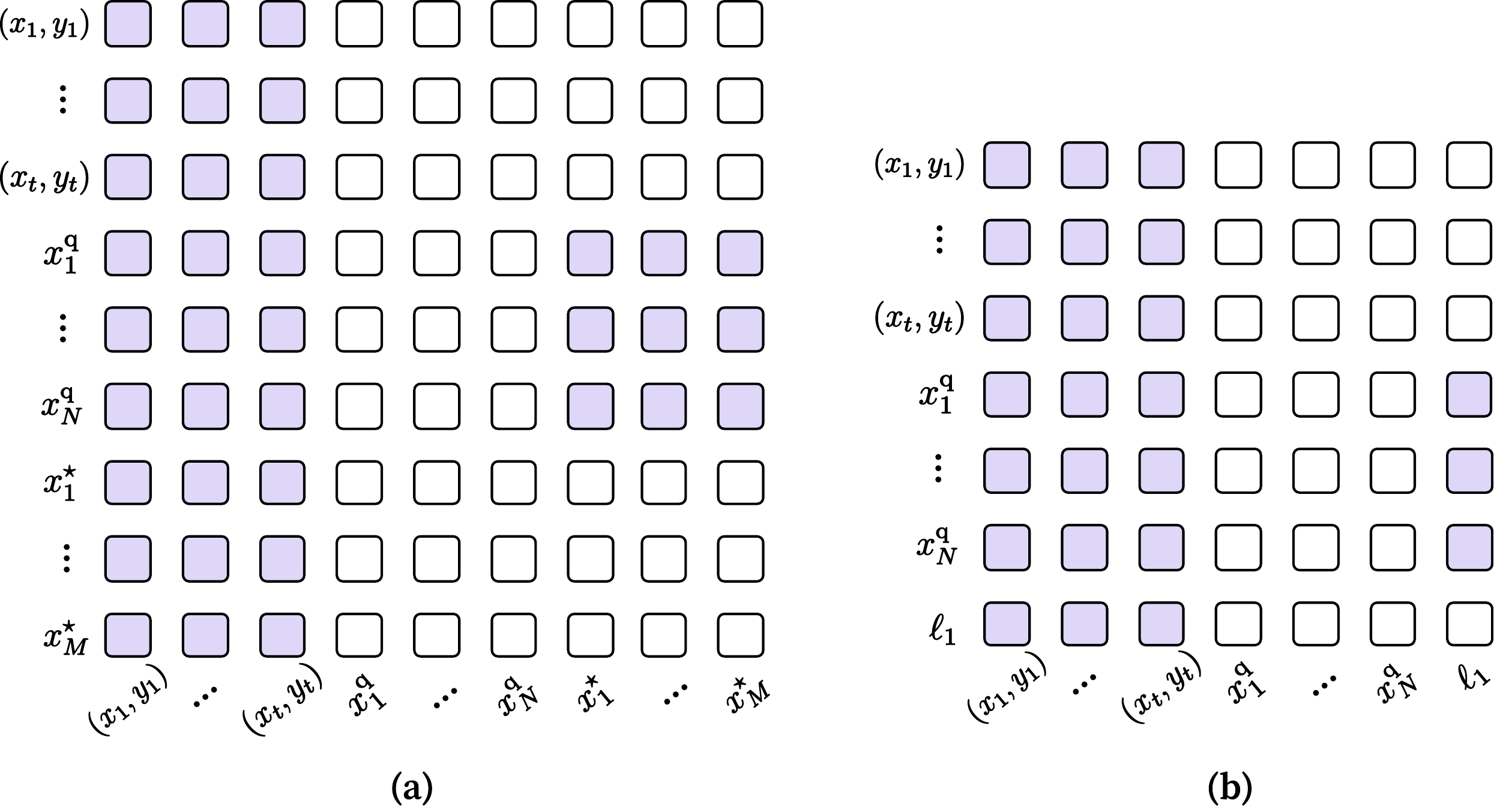}
    \caption{Example attention masks in \aline's transformer architecture. (a) Mask for a predictive target $\xi=\xi_{\pt}^{\yt}$ (b) Mask for a parameter target $\xi=\xi_{\{1\}}^\theta$. Shaded squares indicate allowed attention.}
    \label{fig:attention_mask}
\end{figure}

\section{Experimental details} \label{app:exp_details}

This section provides details for the experimental setups. \Cref{app:al} outlines the specifics for the active learning experiments in \Cref{sec:active_learning_results}, including the synthetic function sampling procedures (\Cref{app:al_sampling}), implementation details for baseline methods (\Cref{app:al_baselines}), and training and evaluation details for these tasks (\Cref{app:al_training_details}). Next, in \Cref{app:bed} we describe the details of BED tasks, including the task descriptions (\Cref{app:bed_tasks}), implementation of the baselines (\Cref{app:bed_baselines}), and the training and evaluation details (\Cref{app:bed_details}). Lastly, \Cref{app:psychometric} contains the specifics of the psychometric modeling experiments, detailing the psychometric function we use (\Cref{app:psychometric_model}) and the setup for the experimental comparisons (\Cref{app:psychometric_exp_details}).

\subsection{Active learning for regression and hyperparameter inference} \label{app:al}

\subsubsection{Synthetic functions sampling procedure} \label{app:al_sampling}

For active learning tasks, \aline is trained exclusively on synthetically generated Gaussian Process (GP) functions. The procedure for generating these functions is as follows.
First, the hyperparameters of the GP kernels, namely the output scale and lengthscale(s), are sampled from their respective prior distributions. For multi-dimensional input spaces ($d_x > 1$), there is a $p_{\text{iso}} = 0.5$ probability that an isotropic kernel is used, meaning that all input dimensions share a common lengthscale. Otherwise, an anisotropic kernel is employed, with a distinct lengthscale sampled for each input dimension. Subsequently, a kernel function is chosen randomly from a pre-defined set, with each kernel having a uniform probability of selection. In our experiments, we utilize the Radial Basis Function (RBF), Matérn 3/2, and Matérn 5/2 kernels.

The kernel's output scale is sampled uniformly from the interval $U(0.1, 1)$. The lengthscale(s) are sampled from $U(0.1, 2) \times \sqrt{d_x}$. Input data points $x$ are sampled uniformly within the range $[-5, 5]$ for each dimension. Finally, Gaussian noise with a fixed standard deviation of $0.01$ is added to the true function output $y$ for each sampled data point. \Cref{fig:gp_examples} illustrates some examples of the synthetic GP functions generated using this procedure.

\begin{figure}[t] 
    \centering
    \includegraphics[width=\linewidth]{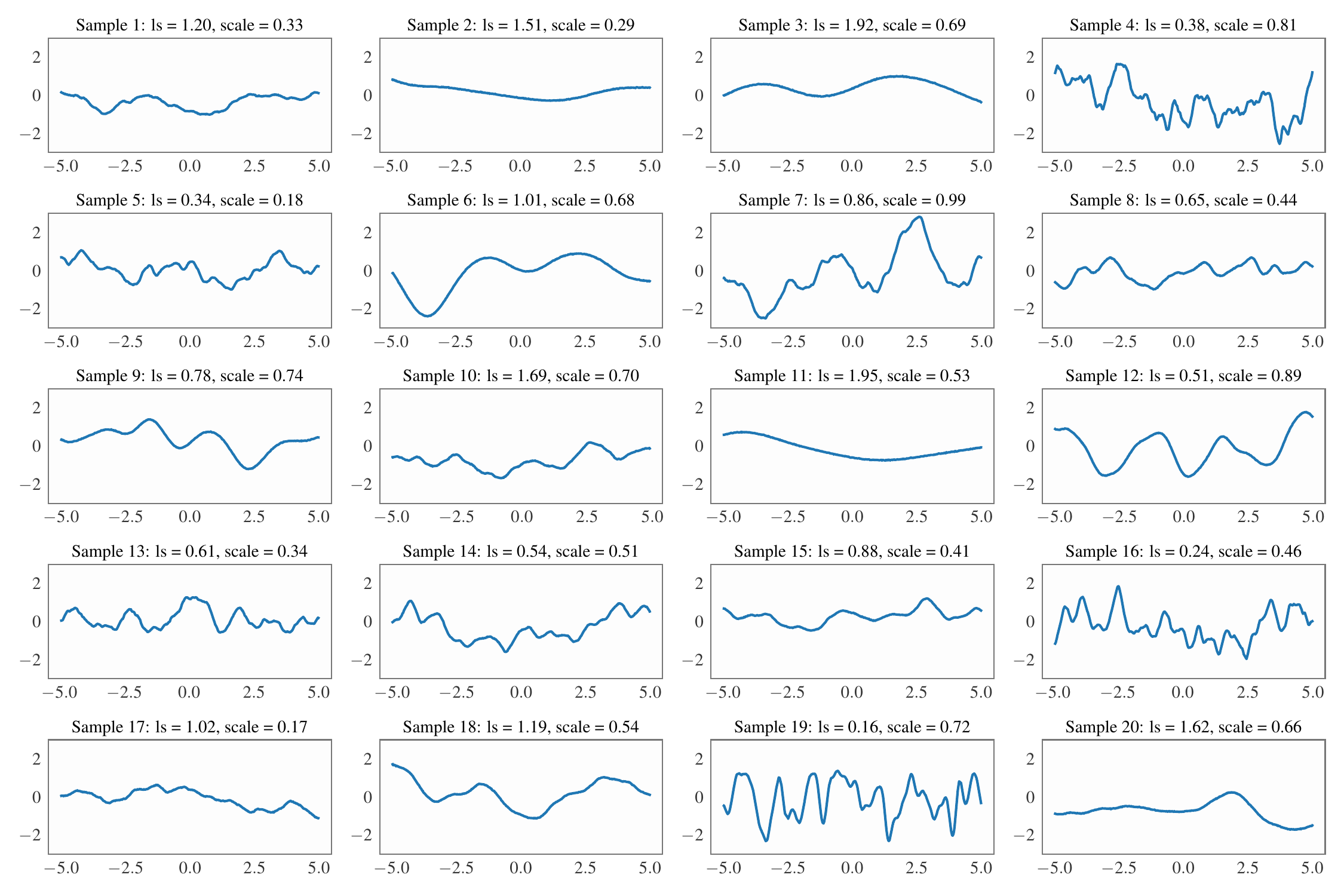}
    \caption{Examples of randomly sampled 1D synthetic GP functions used to train \aline.} 
    \label{fig:gp_examples}
\end{figure}

\subsubsection{Details of acquisition functions} \label{app:al_baselines}
We compare \aline with four commonly used AL acquisition functions. For \textbf{Random Sampling (RS)}, we randomly select one point from the candidate pool as the next query point.

\textbf{Uncertainty Sampling (US)} is a simple and widely used AL acquisition strategy that prioritizes points where the model is most uncertain about its prediction:
\begin{equation}
    \text{US}(x) = \sqrt{\mathbb{V}[y  \given  x, \mathcal{D}]},
\end{equation}
where $\mathbb{V}[y  \given  x, \mathcal{D}]$ is the predictive variance at $x$ given the current training data $\mathcal{D}$.

\textbf{Variance Reduction (VR) \citep{yu2006active}} aims to select a candidate point that is expected to maximally reduce the predictive variance over a pre-defined test set $\{\xt_m \}_{m=1}^M$, which is defined as:
\begin{equation}
    \text{VR}(x) = \frac{\sum_{m=1}^M \left( \text{Cov}_{\text{post}}(\xt, x) \right)^2}{\mathbb{V}[y  \given  x, \mathcal{D}]},
\end{equation}
$\text{Cov}_{\text{post}}(\xt, x)$ is the posterior covariance between the latent function values at $\xt$ and $x$, given the history $\mathcal{D} = \{ (X_{\text{train}}, y_{\text{train}}) \}$, where $X_{\text{train}}$ comprises all currently observed inputs with $y_{\text{train}}$ being their corresponding outputs. It is computed as:
\begin{equation}
    \text{Cov}_{\text{post}}(\xt, x) = k(\xt, x) - k(\xt, X_{\text{train}}) (K_{\text{train}} + \alpha I)^{-1} k(X_{\text{train}}, x).
\end{equation}
Here, $k(\cdot, \cdot)$ is the GP kernel function, $K_{\text{train}} = k(X_{\text{train}}, X_{\text{train}})$, and $\alpha$ is the noise variance.

\textbf{Expected Predictive Information Gain (EPIG) \citep{smith2023prediction}} measures the expected reduction in predictive uncertainty on a target input distribution $\pt(\xt)$. Following \citet{smith2023prediction}, for a Gaussian predictive distribution, the EPIG for a candidate point can be expressed as:
\begin{equation}
    \text{EPIG}(x) = \mathbb{E}_{\pt(\xt)}[\frac{1}{2}\log \frac{\mathbb{V}[y \given x,\mathcal{D}]\mathbb{V}[\yt \given \xt,\mathcal{D}]}{\mathbb{V}[y \given x,\mathcal{D}]\mathbb{V}[\yt \given \xt,\mathcal{D}]-\text{Cov}_{\text{post}}(\xt, x)^2}].
\end{equation}
In practice, we approximate it by averaging over $m$ sampled test points:
\begin{equation}
    \text{EPIG}(x) \approx \frac{1}{2M}\sum_{m=1}^M\log \frac{\mathbb{V}[y \given x,\mathcal{D}]\mathbb{V}[\yt_m \given \xt_m,\mathcal{D}]}{\mathbb{V}[y \given x,\mathcal{D}]\mathbb{V}[\yt_m \given \xt_m,\mathcal{D}]-\text{Cov}_{\text{post}}(\xt_m, x)^2}.
\end{equation}

\subsubsection{Training and evaluation details} \label{app:al_training_details}

For both 1D and 2D input scenarios, \aline is trained for $2 \cdot 10^5$ epochs using a batch size of 200.  The discount factor $\gamma$ for the policy gradient loss is set to 1. For the GP-based baselines, we utilized Gaussian Process Regressors implemented via the \texttt{scikit-learn} library \citep{pedregosa2011scikit}.  The hyperparameters of the GP models are optimized at each step. For the ACE baseline \citep{chang2025amortized}, we use a transformer architecture and an inference head design consistent with our \aline model.

All active learning experiments are evaluated with a candidate query pool consisting of 500 points. Each experimental run commenced with an initial context set consisting of a single data point. The target set size for predictive tasks is set to 100.

\subsection{Benchmarking on Bayesian experimental design tasks} \label{app:bed}

\subsubsection{Task descriptions} \label{app:bed_tasks}

\textbf{Location Finding} \citep{sheng2004maximum} is a benchmark problem commonly used in BED literature \citep{foster2019variational, ivanova2021implicit, blau2022optimizing,ivanova2024step}. The objective is to infer the unknown positions of $K$ hidden sources, $\theta = \{\theta_k \in \mathbb{R}^d \}_{k=1}^K$, by strategically selecting a sequence of observation locations, $x \in \mathbb{R}^d$. 
Each source emits a signal whose intensity attenuates with distance following an inverse-square law. The total signal intensity at an observation location $x$ is given by the superposition of signals from all sources:
\begin{equation}
\mu(\theta, x) = b + \sum_{k=1}^{K} \frac{\alpha_k}{m + \|\theta_k - x\|^2},
\end{equation}
where $\alpha_k$ are known source strength constants, and $b, m > 0$ are constants controlling the background level and maximum signal intensity, respectively. In this experiment, we use $K=1$, $d=2$, $\alpha_k=1$, $b=0.1$ and $m=10^{-4}$, and the prior distribution over each component of a source's location $\theta_k = (\theta_{k, 1}, ..., \theta_{k, d})$ is uniform over the interval $[0, 1]$.

The observation is modeled as the log-transformed total intensity corrupted by Gaussian noise:
\begin{equation}
\log y \given \theta, x \sim \mathcal{N}(\log \mu(\theta, x), \sigma^2),
\end{equation}
where we use $\sigma=0.5$ in our experiments.

\textbf{Constant Elasticity of Substitution (CES)} \citep{arrow1961capital} considers a behavioral economics problem in which a participant compares two baskets of goods and rates the subjective difference in utility between the baskets on a sliding scale from 0 to 1. The utility of a basket $z$, consisting of $K$ goods with different values, is characterized by latent parameters $\theta = (\rho, \bm\alpha, u)$. The design problem is to select pairs of baskets, $x=(z, z') \in [0, 100]^{2K}$, to infer the participant’s latent utility parameters. 

The utility of a basket $z$ is defined using the constant elasticity of substitution function, as:
\begin{equation}
    U(z)= \left( \sum_{i=1}^K z_i^\rho \alpha_i\right)^{\frac{1}{\rho}}.
\end{equation}

The prior of the latent parameters is specified as:
\begin{equation}
    \begin{split}
    \rho & \sim \mathrm{Beta}(1,1) \\
    \bm\alpha & \sim \mathrm{Dirichlet}(\mathbf{1}_K) \\
    \log u & \sim \mathcal{N}(1,3^2).
    \end{split}
\end{equation}

The subjective utility difference between two baskets is modeled as follows:
\begin{equation}
\begin{split}
    \eta & \sim \mathcal{N}\left(u \cdot \left(U(z)-U\left(z'\right)\right), u^2 \cdot \tau^2 \cdot \left(1+\left\|z-z'\right\|\right)^2  \right) \\
    y & = \mathrm{clip}(\mathrm{sigmoid}(\eta), \epsilon, 1-\epsilon).
\end{split}
\end{equation}

In this experiment, we choose $K=3$, $\tau=0.005$ and $\epsilon=2^{-22}$.

% Baseline details
\subsubsection{Implementation details of baselines} \label{app:bed_baselines}
We compare \aline with four baseline methods. For \textbf{Random Design} policy, we randomly sample a design from the design space using a uniform distribution.

\textbf{VPCE} \citep{foster2020unified} iteratively infers the posterior through variational inference and maximizes the myopic Prior Contrastive Estimation (PCE) lower bound by gradient descent with respect to the experimental design. 
The hyperparameters used in the experiments are given in \Cref{tab:vpce_configurations}.
\begin{table}[h]
    \centering
    \caption{Additional hyperparameters used in VPCE \citep{foster2020unified}.}
    \begin{tabular}{lrr}
    \toprule
    \textbf{Parameter} & \textbf{Location Finding} & \textbf{CES} \\
    \midrule
    VI gradient steps & 1000 & 1000 \\
    VI learning rate & $10^{-3}$ & $10^{-3}$ \\
    Design gradient steps & 2500 & 2500 \\
    Design learning rate & $10^{-3}$ & $10^{-3}$ \\
    Contrastive samples $L$ & 500 & 10 \\
    Expectation samples & 500 & 10 \\
    \bottomrule
    \label{tab:vpce_configurations}
    \end{tabular}
    \end{table}

\textbf{Deep Adaptive Design (DAD)} \citep{foster2021deep} learns an amortized design policy guided by sPCE lower bound. For a design policy $\pi$, and $L \geq 0$ contrastive samples, sPCE over a sequence of $T$ experiments is defined as:
\begin{equation}
\mathcal{L}_{T}(\pi, L)=\mathbb{E}_{p\left(\theta_{0}, \HH_T \mid \pi\right) p\left(\theta_{1: L}\right)}\left[\log \frac{p\left(\HH_T \mid \theta_{0}, \pi\right)}{\frac{1}{L+1} \sum_{\ell=0}^{L} p\left(\HH_T \mid \theta_{\ell}, \pi\right)}\right],
\end{equation}
where the contrastive samples $\theta_{1: L}$ are drawn independently from the prior $p(\theta)$. The bound becomes tight as $L \rightarrow \infty$, with a convergence rate of $\mathcal{O}(L^{-1})$.

The design network comprises an MLP encoder that encodes historical data into a fixed-dimensional representation, and an MLP emitter that proposes the next design point. The encoder processes the concatenated design-observation pairs from history and aggregates their representations through a pooling operation.

Following the work of \citet{foster2021deep}, the encoder network consists of two fully connected layers with 128 and 16 units with ReLU activation applied to the hidden layer. The emitter is implemented as a fully connected layer that maps the pooled representation to the design space. The policy is trained using the Adam optimizer with an initial learning rate of $5 \cdot 10^{-5}$, $\beta = (0.8, 0.998)$, and an exponentially decaying learning rate, reduced by a factor of $\gamma = 0.98$ every 1000 epochs. In the Location Finding task, the model is trained for $10^5$ epochs, and $10^4$ contrastive samples are utilized in each training step for the estimation of the sPCE lower bound. Note that, for the CES task, we applied several adjustments, including normalizing the input, applying the Sigmoid and Softplus transformations to the output before mapping it to the design space, increasing the depth of the network, and initializing weights using Xavier initialization \citep{glorot2010understanding}. However, DAD failed to converge during training in our experiments. Therefore, we report the results provided by \citet{blau2022optimizing}.

\textbf{vsOED} \citep{shen2025variational} is an amortized BED method that employs an actor-critic reinforcement learning framework. It utilizes separate networks for the actor (policy), the critic (value function), and the variational posterior approximation. The reward signal for training the policy is the incremental improvement in the log-probability of the ground-truth parameters under the variational posterior, which is estimated at each step of the experiment. Following the original implementation, a distinct posterior network is trained for each design stage, while the actor and critic share a common backbone. For our implementation, the hidden layers of all networks are 3-layer MLPs with 256 units and ReLU activations. The posterior network outputs the parameters for an 8-component Gaussian Mixture Model (GMM). The input to the actor and critic networks is the zero-padded history of design-observation pairs, concatenated with a one-hot encoding of the current time step. We train for $10^4$ epochs with a batch size of $10^4$ and a $10^6$-sized replay buffer. The learning rate starts at $10^{-3}$ with a 0.9999 exponential decay per epoch, and the discount factor is 0.9. To encourage exploration for the deterministic policy, Gaussian noise is added during training; the initial noise scale is 0.5 for the Location Finding task and 5.0 for the CES task, with decay rates of 0.9999 and 0.9998, respectively.

\textbf{RL-BOED} \citep{blau2022optimizing} frames the design policy optimization as a Markov Decision Process (MDP) and employs reinforcement learning to learn the design policy. It utilizes a stepwise reward function to estimate the marginal contribution of the $t$-th experiment to the sEIG.

The design network shares a similar backbone architecture to that of DAD, with the exception that the deterministic output of the emitter is replaced by a Tanh-Gaussian distribution. The encoder comprises two fully connected layers with 128 units and ReLU activation, followed by an output layer with 64 units and no activation.
Training is conducted using Randomized Ensembled Double Q-learning (REDQ) \citep{chen2021randomized}, the full configurations are reported in \Cref{tab:rl_boed_configurations}.
\begin{table}[h]
    \centering
    \caption{Additional hyperparameters used in RL-BOED \citep{blau2022optimizing}.}
    \begin{tabular}{lrr}
    \toprule
    \textbf{Parameter} & \textbf{Location Finding} & \textbf{CES} \\
    \midrule
    Critics & 2 & 2 \\
    Random subsets & 2 & 2 \\
    Contrastive samples $L$ & $10^5$ & $10^5$ \\
    Training epochs & $10^5$ & $5\cdot 10^4$ \\
    Discount factor $\gamma$ & 0.9 & 0.9 \\
    Target update rate & $10^{-3}$ & $5\cdot 10^{-3}$ \\
    Policy learning rate & $10^{-4}$ & $3 \cdot 10^{-4}$ \\
    Critic learning rate & $3 \cdot 10^{-4}$ & $3 \cdot 10^{-4}$ \\
    Buffer size & $10^7$ & $10^6$ \\
    \bottomrule 
    \label{tab:rl_boed_configurations}
    \end{tabular}
    \vspace{-7mm}
    \end{table}

\subsubsection{Training and evaluation details} \label{app:bed_details}

In the Location Finding task, the number of sequential design steps, $T$, is set to 30. For all evaluated methods, the sPCE lower bound is estimated using $L=10^6$ contrastive samples. The \aline is trained over $10^5$ epochs with a batch size of 200.  The discount factor $\gamma$ for the policy gradient loss is set to 1. During the evaluation phase, the query set consists of 2000 points, which are drawn uniformly from the defined design space. For the CES task, each experimental run consists of $T=10$ design steps. The sPCE lower bound is estimated using $L=10^7$ contrastive samples. The \aline is trained for $2 \cdot 10^5$ epochs with a batch size of 200, and we use 2000 for the query set size.

\subsection{Psychometric model} \label{app:psychometric}

\subsubsection{Model description}  \label{app:psychometric_model}
In this experiment, we use a four-parameter psychometric function with the following parameterization:
\begin{equation*}
    \pi(x) = \theta_3 \cdot \theta_4 + (1-\theta_4)F(\frac{x-\theta_1}{\theta_2}),
\end{equation*}
where:
\begin{itemize}
    \item $\theta_1$ (threshold): The stimulus intensity at which the probability of a positive response reaches a specific criterion. It represents the location of the psychometric curve. We use a uniform prior $U[-3, 3]$ for $\theta_1$.
    \item $\theta_2$ (slope): Describes the steepness of the psychometric function. Smaller values of $\theta_2$ indicate a sharper transition, reflecting higher sensitivity around the threshold. We use a uniform prior $U[0.1, 2]$ for $\theta_2$.
    \item $\theta_3$ (guess rate): The baseline response probability for stimuli far below the threshold, reflecting responses made by guessing. We use a uniform prior $U[0.1, 0.9]$ for $\theta_3$.
    \item $\theta_4$ (lapse rate): The rate at which the observer makes errors independent of stimulus intensity, representing an upper asymptote on performance below 1. We use a uniform prior $U[0, 0.5]$ for $\theta_4$.
\end{itemize}

We employ a Gumbel-type internal link function $F = 1-e^{-10^z}$ where $z = \frac{x-\theta_1}{\theta_2}$. Lastly, a binary response $y$ is simulated from the psychometric function $\pi(x)$ using a Bernoulli distribution with probability of success $p = \pi(x)$.

\subsubsection{Experimental details}  \label{app:psychometric_exp_details}

We compare \aline against two established Bayesian adaptive methods:
\begin{itemize}
    \item QUEST+ \citep{watson2017quest+}: QUEST+ is an adaptive psychometric procedure that aims to find the stimulus that maximizes the expected information gain about the parameters of the psychometric function, or equivalently, minimizes the expected entropy of the posterior distribution over the parameters. It typically operates on a discrete grid of possible parameter values and selects stimuli to reduce uncertainty over this entire joint parameter space. In our experiments, QUEST+ is configured to infer all four parameters simultaneously.
    \item Psi-marginal \citep{prins2013psi}: The Psi-marginal method is an extension of the psi method \citep{kontsevich1999bayesian} that allows for efficient inference by marginalizing over nuisance parameters. When specific parameters are designated as targets of interest, Psi-marginal optimizes stimulus selection to maximize information gain specifically for these target parameters, effectively treating the others as nuisance variables. This makes it highly efficient when only a subset of parameters is critical.  
\end{itemize}

For each simulated experiment, true underlying parameters are sampled from their prior distributions. Stimulus values $x$ are selected from a discrete set of size 200 drawn uniformly from the range $[-5, 5]$.

\section{Additional experimental results} \label{app:additional_exps}
\subsection{Active Exploration of High-Dimensional Hyperparameter Landscapes} \label{app:hpo}
To demonstrate \aline's utility on complex, high-dimensional tasks, we conduct a new set of experiments on actively exploring hyperparameter performance landscapes. This experiment aims to efficiently characterize a machine learning model's overall behavior on a new task, allowing practitioners to quickly assess a model family's viability or understand its sensitivities. The task is to actively query a small number of hyperparameter configurations to build a surrogate model that accurately predicts performance for a larger, held-out set of target configurations. We use high-dimensional, real-world tasks from the HPO-B benchmark \citep{arango2hpo}, evaluating on \texttt{rpart} (6D), \texttt{svm} (8D), \texttt{ranger} (9D), and \texttt{xgboost} (16D) datasets. \aline is trained on their multiple pre-defined training sets. We then evaluate its performance, alongside non-amortized GP-based baselines and an amortized surrogate baseline (ACE-US), on the benchmark's held-out and entirely unseen test sets. 

Table \ref{tab:hpo_results} shows the RMSE results after 30 steps, averaged across all test tasks for each dataset. First, both amortized methods, \aline and ACE-US, significantly outperform all non-amortized GP-based baselines across all tasks. This highlights the power of meta-learning in this domain. GP-based methods must learn each new performance landscape from scratch, which is highly inefficient in high dimensions. In contrast, both \aline and ACE-US are pre-trained on hundreds of related tasks, and their Transformer architectures meta-learn the structural patterns common to these landscapes. This shared prior knowledge allows them to make far more accurate predictions from sparse data. Second, while ACE-US performs strongly due to its amortized nature, \aline consistently achieves the best or joint-best performance. This demonstrates the additional, crucial benefit of our core contribution: the learned acquisition policy. ACE-US relies on a standard heuristic, whereas \aline's policy is trained end-to-end to learn how to optimally explore the landscape, leading to more informative queries and ultimately a more accurate final surrogate model.

\begin{table}[h]
\centering
\caption{RMSE ($\downarrow$) on the HPO-B benchmark after 30 active queries. Results show the mean and 95\% CI over all test tasks for each dataset.}
\label{tab:hpo_results}
\resizebox{\textwidth}{!}{%
\begin{tabular}{lcccccc}
\toprule
           & GP-RS         & GP-US         & GP-VR         & GP-EPIG       & ACE-US        & \aline (ours) \\ \midrule
rpart (6D) & $0.07\pm0.03$ & $0.04\pm0.02$ & $0.04\pm0.02$ & $0.05\pm0.02$ & $\mathbf{0.01}\pm0.00$ & $\mathbf{0.01\pm}0.00$   \\
svm (8D)   & $0.22\pm0.11$ & $0.11\pm0.05$ & $0.12\pm0.07$ & $0.15\pm0.08$ & $0.04\pm0.01$ & $\mathbf{0.03\pm}0.01$   \\
ranger (9D)& $0.10\pm0.02$ & $0.07\pm0.01$ & $0.08\pm0.02$ & $0.08\pm0.02$ & $\mathbf{0.02}\pm0.01$ & $\mathbf{0.02\pm}0.01$   \\
xgboost (16D)& $0.09\pm0.02$ & $0.09\pm0.02$ & $0.09\pm0.02$ & $0.09\pm0.02$ & $0.04\pm0.01$ & $\mathbf{0.03\pm}0.01$   \\ \bottomrule
\end{tabular}%
}
\end{table}

\subsection{Active learning for regression and hyperparameter inference} \label{app:al_additional_exps}

\begin{figure}[t] 
    \centering
    \includegraphics[width=\linewidth]{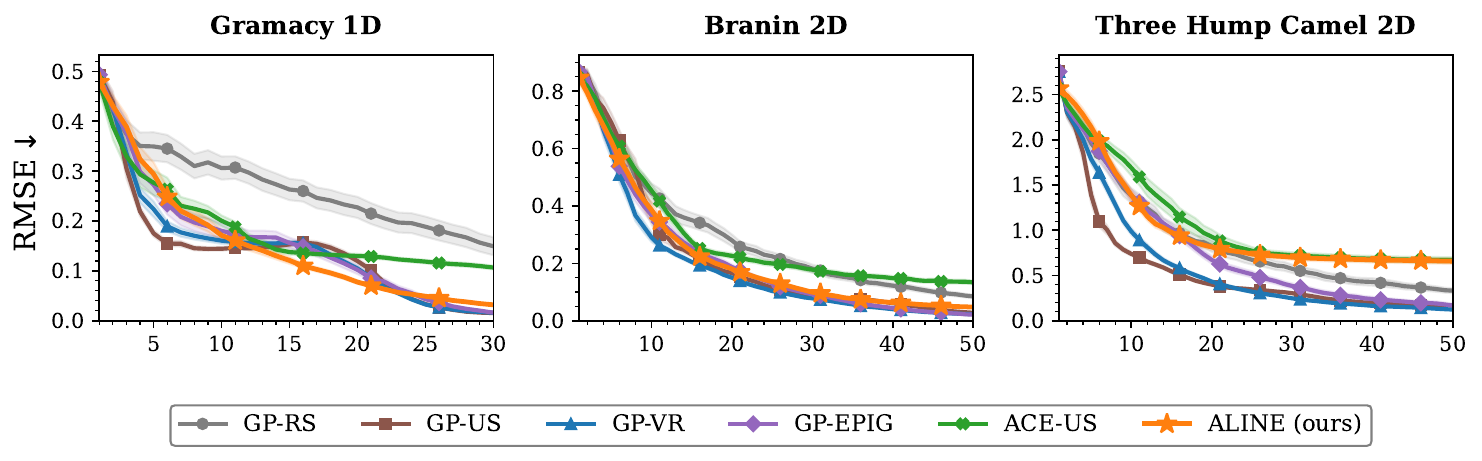}
    \caption{Predictive performance in terms of RMSE on three other active learning benchmark functions. Results show the mean and 95\% confidence interval (CI) across 100 runs. Notably, on the Three Hump Camel function, the performance of amortized methods like \aline and ACE-US is limited, as its output scale significantly differs from the pre-training distribution, highlighting a scenario of distribution shift.} 
    \label{fig:al_additional_results}
\end{figure}

\paragraph{AL results on more benchmark functions.}
To further assess \aline, we present performance evaluations on an additional set of active learning benchmark functions, see \Cref{fig:al_additional_results}. The results on Gramacy and Branin show that we are on par with the GP baselines. For the Three Hump Camel, we see both \aline and ACE-US showing reduced accuracy. This is because the function's output value range extends beyond that of the GP functions used during pre-training. This highlights a potential area for future work, such as training \aline on a broader prior distribution of functions, potentially leading to more universally capable models.

\paragraph{Acquisition visualization for AL.}
To qualitatively understand the behavior of our model, we visualize the query strategy employed by \aline for AL on a randomly sampled synthetic function \Cref{fig:al_strategy_visualization}. This visualization illustrates how \aline iteratively selects query points to reduce uncertainty and refine its predictive posterior.

\begin{figure}[h] 
    \centering
    \includegraphics[width=\linewidth]{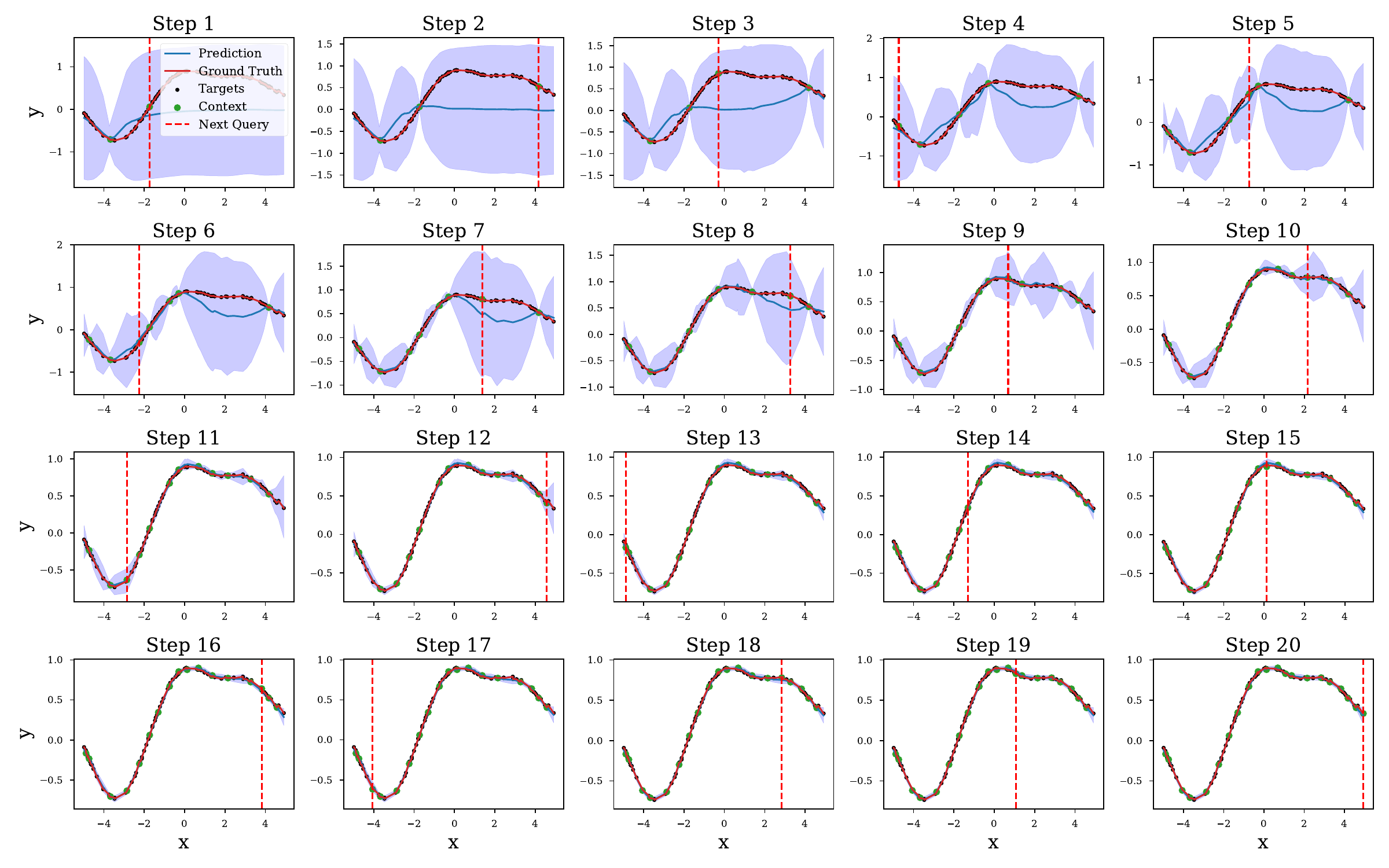}
    \vspace{-4mm}
    \caption{Sequential query strategy of \aline on a 1D synthetic GP function over 20 steps. As more points are queried, the model's prediction increasingly aligns with the ground truth, and the uncertainty is strategically reduced.} 
    \label{fig:al_strategy_visualization}
\end{figure}

\paragraph{Hyperparameter inference visualization.}
We now visualize the evolution of \aline's estimated posterior distributions for the underlying GP hyperparameters for a randomly drawn 2D synthetic GP function (see \Cref{fig:hyperparameter_posterior}). The posteriors are shown after 1, 15, and 30 active data acquisition steps. As \aline strategically queries more informative data points, its posterior beliefs about these generative parameters become increasingly concentrated and accurate.

\begin{figure}[h] 
    \centering
    \includegraphics[width=\linewidth]{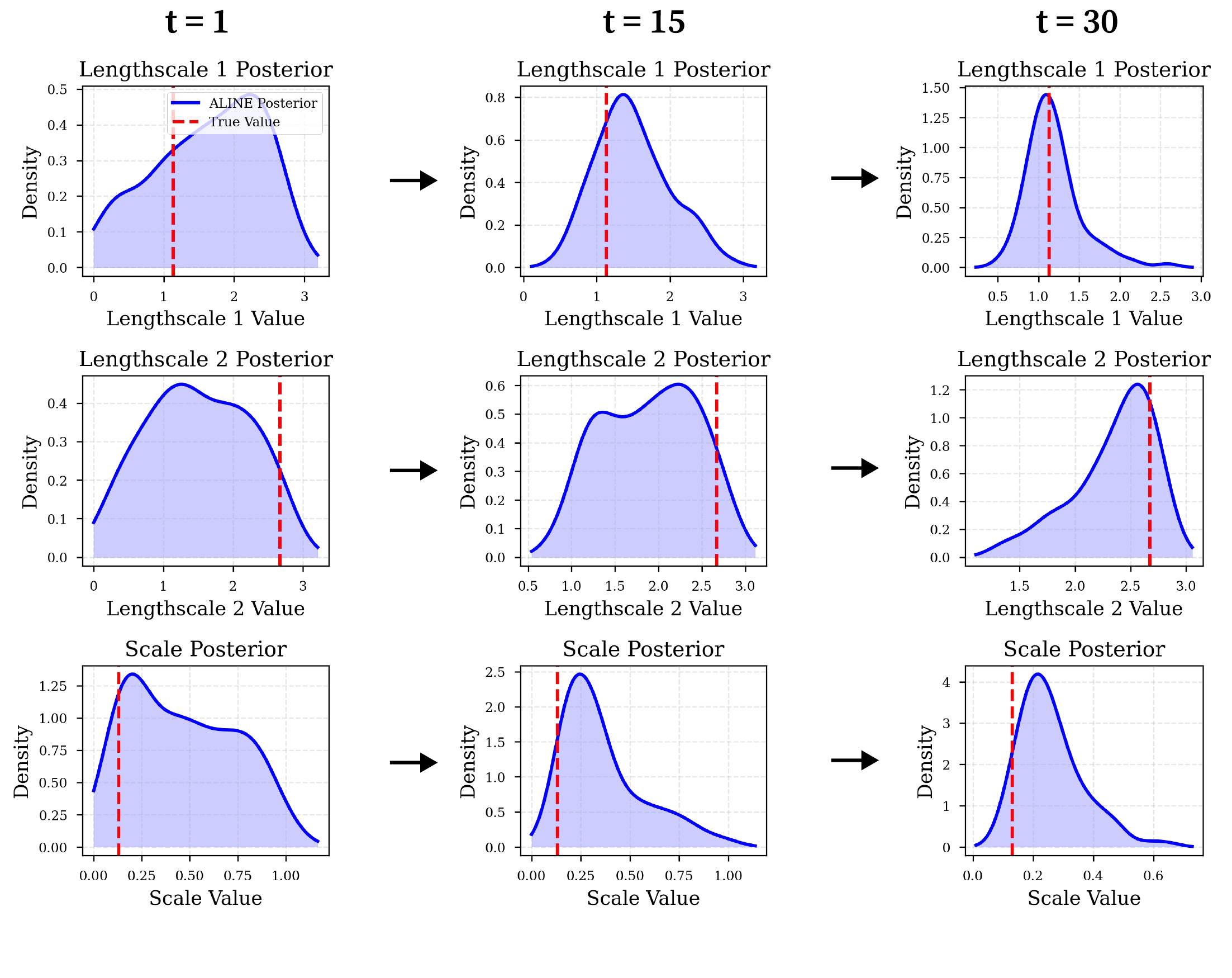}
    \vspace{-5mm}
    \caption{Estimated posteriors for the two lengthscales and the output scale obtained from \aline after $t=1$, $t=15$, and $t=30$ active query steps. The posteriors progressively concentrate around the true parameter values as more data is acquired.}
    \label{fig:hyperparameter_posterior}
\end{figure}

\paragraph{Inference time.}

To assess the computational efficiency of \aline, we report the inference times for the AL tasks in \Cref{tab:al_inference_time}. The times represent the total duration to complete a sequence of 30 steps for 1D functions and 50 steps for 2D functions, averaged over 10 independent runs. As both \aline and ACE-US  perform inference via a single forward pass per step once trained, they are significantly faster compared to traditional GP-based methods.

\begin{table}[h]
    \centering
    \caption{Comparison of inference times (seconds) for different AL methods on 1D (30 steps) and 2D (50 steps) tasks. Values are averaged over 10 runs (mean $\pm$ standard deviation).}
    \begin{tabular}{lrr}
    \toprule
    \multirow{2}{*}{\textbf{Methods}} & \multicolumn{2}{c}{\textbf{Inference time (s)}} \\
    \cmidrule{2-3}
     & 1D \& 30 steps & 2D \& 50 steps \\
    \midrule
    GP-US & \vpmse{0.62}{0.09} & \vpmse{1.72}{0.23} \\
    GP-VR & \vpmse{1.41}{0.14} & \vpmse{4.03}{0.18} \\
    GP-EPIG & \vpmse{1.34}{0.11} & \vpmse{3.43}{0.24} \\
    ACE-US & \vpmse{0.08}{0.00} & \vpmse{0.19}{0.02} \\
    \aline & \vpmse{0.08}{0.00} & \vpmse{0.19}{0.02} \\
    \bottomrule
    \end{tabular}
    \label{tab:al_inference_time}
    \vspace{-4mm}
\end{table}

\subsection{Benchmarking on Bayesian experimental design tasks} \label{app:bed_additional_exps}

In this section, we provide additional qualitative results for \aline's performance on the BED benchmark tasks. Specifically, for the Location Finding task, we visualize the sequence of designs chosen by \aline and the resulting posterior distribution over the hidden source's location (\Cref{fig:bed_pos_loc}). For the CES task, we present the estimated marginal posterior distributions for the model parameters, comparing them against their true underlying values (\Cref{fig:bed_pos_ces}). We see that \aline offers accurate parameter inference.

\begin{figure}[h] 
    \centering
    \includegraphics[width=0.6\linewidth]{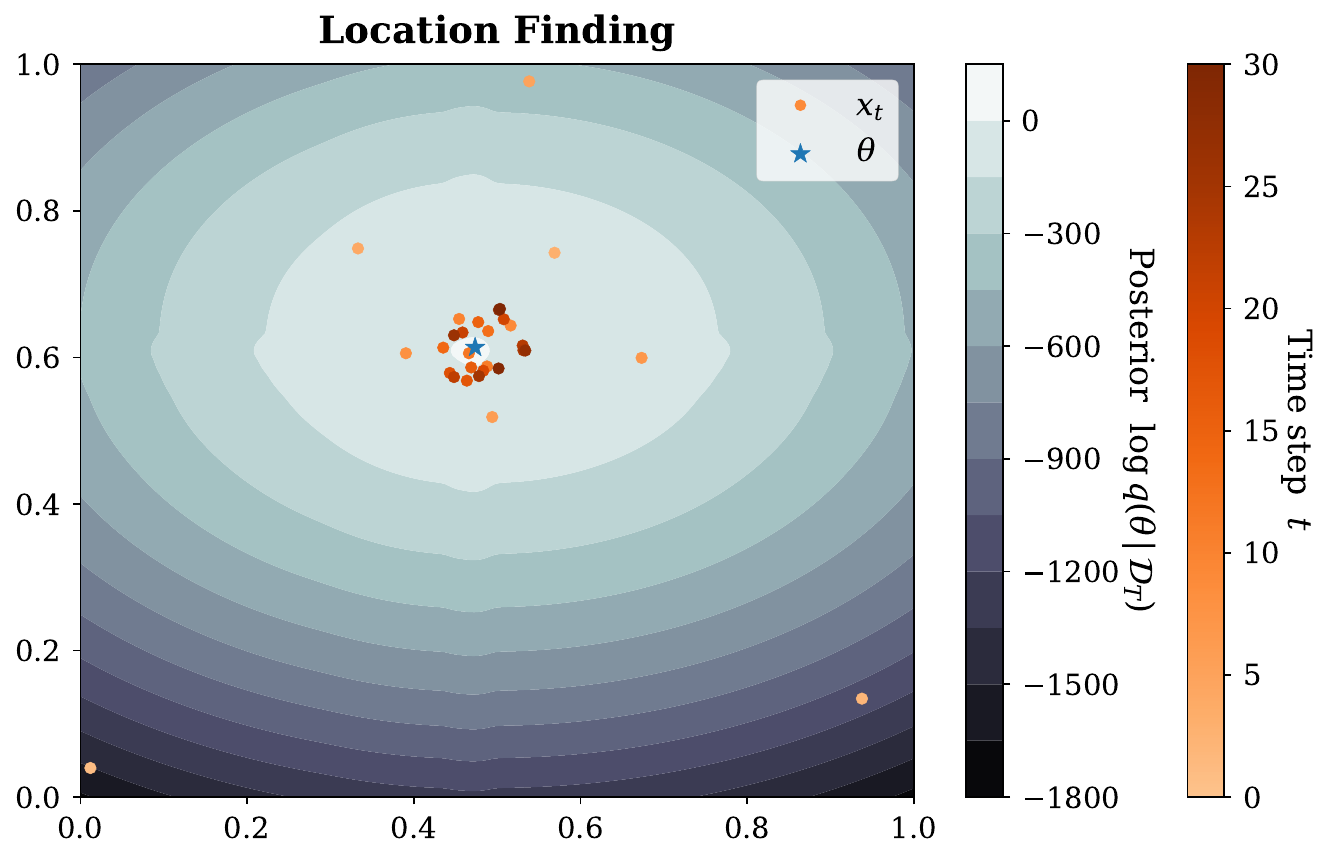}
    \caption{Visualization of \aline's design policy and resulting posterior for the Location Finding task. The contour plot shows the log posterior probability density of the source location $\theta$ (true location marked by blue star) after $T=30$ steps. Queried locations, with color indicating the time step of acquisition, demonstrating a concentration of queries around the true source.}
    \label{fig:bed_pos_loc}
\end{figure}

\begin{figure}[h] 
    \centering
    \includegraphics[width=\linewidth]{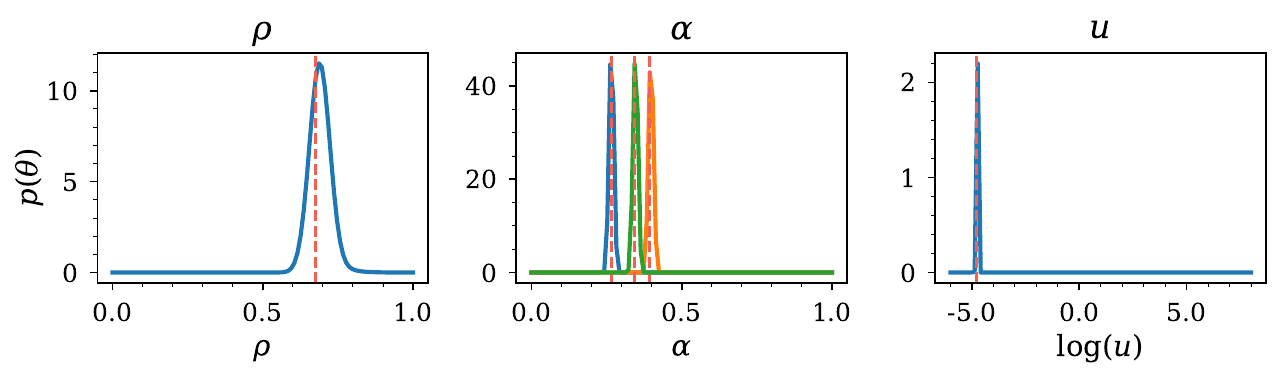}
    \vspace{-2mm}
    \caption{\aline's estimated marginal posterior distributions for the parameters of the CES task after $T=10$ query steps. The dashed red lines indicate the true parameter values. The posteriors are well-concentrated around the true values, demonstrating accurate parameter inference.}
    \label{fig:bed_pos_ces}
    \vspace{-2mm}
\end{figure}

\subsection{Psychometric model} \label{app:psychometric_additional_exps}

% \paragraph{Targeted vs. full parameter acquisition.}
%  To further demonstrate the advantage of \aline's flexible targeting capability, we conduct an additional experiment within the psychometric modeling task. We evaluate two configurations: one where \aline specifically targets a subset of parameters (\emph{targeted}), which we show in the main text, and another configuration where \aline aims to infer all four parameters simultaneously (\emph{full}). As illustrated in \Cref{fig:psychometric_ablation}, directing \aline's data acquisition strategy towards a specific parameter subset results in more accurate posterior inference for those targeted parameters compared to when \aline is tasked with learning all parameters concurrently. This underscores the benefit of targeted querying for achieving enhanced precision on parameters of primary interest.

% \begin{figure}[h] 
%     \centering
%     \includegraphics[width=0.7\linewidth]{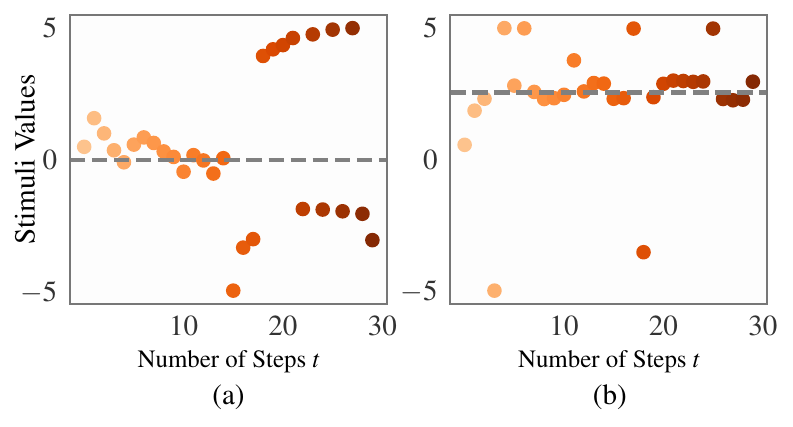}
%     \caption{Performance comparison for two \aline configurations under psychometric model. \aline (full) targets all four psychometric parameters simultaneously, and \aline (targeted) focuses on specific parameter subsets. When specifically targeting the guess and lapse rates, \aline (targeted) achieves significantly lower error for this subset compared to the \aline (full) version, demonstrating the benefit of its flexible focusing capability for improved parameter inference.}
%     \label{fig:psychometric_ablation}
% \end{figure}

\paragraph{Demonstrations of flexibility.} We conduct two ablations to explicitly validate \aline's flexible targeting capabilities.

First, we test the ability to switch targets mid-rollout. We configure a single experiment where for the first 15 steps, the target is \textit{threshold \& slope} parameters, and at step 16, the target is switched to the \textit{guess rate \& lapse rate}. As shown in Figure \ref{fig:psychometric_ablation}(a), \aline's acquisition strategy adapts immediately and correctly, shifting its queries from the decision threshold region to the extremes of the stimulus range to gain maximal information about the new targets.

Second, we test generalization to novel target combinations. A single \aline model is trained to handle two distinct targets separately: (1) \textit{threshold \& slope} and (2) \textit{guess \& lapse rate}. At deployment, we task this model with a novel, unseen combination: targeting all four parameters simultaneously. As shown in Figure \ref{fig:psychometric_ablation}(b), the resulting policy is a sensible mixture of the two underlying strategies it has learned, strategically alternating queries between points near the decision threshold and points at the extremes. This confirms that \aline can successfully compose its learned strategies to generalize to new inference goals at runtime.

\begin{figure}[h]
    \centering
    \includegraphics[width=0.7\textwidth]{figures/psychometric_ablation.pdf}
    \caption{Demonstration of \aline's runtime flexibility on the psychometric task. (a) The acquisition strategy adapts after the inference target is switched mid-rollout from (threshold \& slope) to (guess \& lapse rate). (b) When tasked with a novel combined target (all four parameters), the policy generalizes by mixing the two distinct strategies it learned during training.}
    \label{fig:psychometric_ablation}
    \vspace{-2mm}
\end{figure}

\paragraph{Inference time.} We additionally assess the computational efficiency of each method in proposing the next design point. The average per-step design proposal time, measured over the 30-step psychometric experiments across 20 runs, is $0.002 \pm 0.00$s for \aline, $0.07\pm0.00$s for QUEST+, and $0.02\pm0.00$s for Psi-marginal. Methods like QUEST+ and Psi-marginal, which often rely on grid-based posterior estimation, face rapidly increasing computational costs as the parameter space dimensionality or required grid resolution grows. \aline, however, estimates the posterior via the transformer in a single forward pass, making its inference time largely insensitive to these factors. Thus, this computational efficiency gap is anticipated to become even more pronounced for more complex psychometric models.

\section{Computational resources and software} \label{app:resource_and_license}

All experiments presented in this work, encompassing model development, hyperparameter optimization, baseline evaluations, and preliminary analyses, are performed on a GPU cluster equipped with AMD MI250X GPUs. The total computational resources consumed for this research, including all development stages and experimental runs, are estimated to be approximately 5000 GPU hours. For each experiment, it takes around 20 hours to train an \aline model for $10^5$ epochs.
The core code base is built using Pytorch (\url{https://pytorch.org/}, License: modified BSD license). For the Gaussian Process (GP) based baselines, we utilize Scikit-learn \citep{pedregosa2011scikit} (\url{https://scikit-learn.org/}, License: modified BSD license). The DAD baseline is adapted from the original authors' publicly available code \citep{foster2021deep} (\url{https://github.com/ae-foster/dad}; MIT License). Our implementations of the RL-BOED and vsOED baselines are adapted from the official repositories provided by \citep{barlas2025performance} (\url{https://github.com/yasirbarlas/RL-BOED}; MIT License) and \citep{shen2025variational} (\url{https://github.com/wgshen/vsOED}; MIT License), respectively. We use \texttt{questplus} package (\url{https://github.com/hoechenberger/questplus}, License: GPL-3.0) to implement QUEST+, and use \texttt{Psi-staircase} (\url{https://github.com/NNiehof/Psi-staircase}, License: GPL-3.0) to implement the Psi-marginal method.

\end{document}